\newtheorem{theorem}{Theorem}
\newtheorem{definition}[theorem]{Definition}
\newtheorem{lemma}[theorem]{Lemma}
\newtheorem{remark}[theorem]{Remark}
\title{
Sparse Waypoint Validity Checking for Self-Entanglement-Free Tethered Path Planning
}
\author{Tong Yang, \emph{Member, IEEE}, Jiangpin Liu, \\
Yue Wang$^*$, \emph{Member, IEEE}, and Rong Xiong, \emph{Member, IEEE}
\thanks{This work was supported by the National Key R\&D Program of China under Grant 2021ZD0114500. 
\textit{(Corresponding author: Yue Wang and Rong Xiong.)}}
\thanks{Tong Yang, Jiangpin Liu, Yue Wang, and Rong Xiong are with the State Key 
Laboratory of Industrial Control and Technology, Zhejiang University, P.R. China. }
}
\begin{document}

\maketitle

\begin{abstract}
A novel mechanism to derive self-entanglement-free (SEF) path for tethered differential-driven robots is proposed in this work. 
The problem is tailored to the deployment of tethered differential-driven robots in situations where an omni-directional tether re-tractor is not available. 
This scenario is frequently encountered when it is impractical to concurrently equip an omni-directional tether retracting mechanism with other geometrically intricate devices, such as a manipulator, which is notably relevant in applications like disaster recovery, spatial exploration, etc. 
Without specific attention to the spatial relation between the shape of the tether and the pose of the mobile unit, the issue of self-entanglement arises when the robot moves, resulting in unsafe robot movements and the risk of damaging the tether. 
In this paper, the SEF constraint is first formulated as the boundedness of a relative angle function which characterises the angular difference between the tether stretching direction and the robot's heading direction. 
Then, a constrained searching-based path planning algorithm is proposed which produces a path that is sub-optimal whilst ensuring the avoidance of tether self-entanglement. 
Finally, the algorithmic efficiency of the proposed path planner is further enhanced by proving the conditioned sparsity of the primitive path validity checking module. 
The effectiveness of the proposed algorithm is assessed through case studies, comparing its performance against untethered differential-driven planners in challenging planning scenarios. 
To further evaluate the enhanced algorithmic efficiency, a comparative analysis is conducted between the normal node expansion module and the improved node expansion module which incorporates sparse waypoint validity checking. 
Real-world tests are also conducted to validate the algorithm's performance. 
An open-source implementation has also made available for the benefit of the robotics community. 
\end{abstract}

\begin{IEEEkeywords}
Tethered Path Planning, Self-entanglement-free Path Planning
\end{IEEEkeywords}

\section{Introduction}

\begin{figure}[t]
\centering
\subfigure[path with self-entanglement]{
\includegraphics[width=0.22\textwidth]{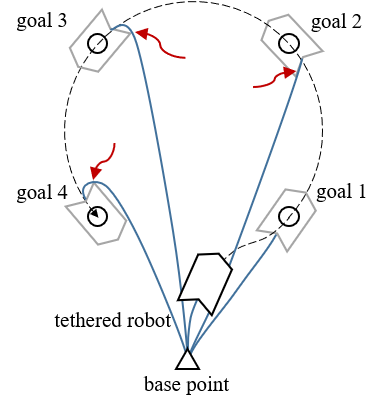}\label{fig:fig1a}
}
\subfigure[self-entanglement-free path]{
\includegraphics[width=0.22\textwidth]{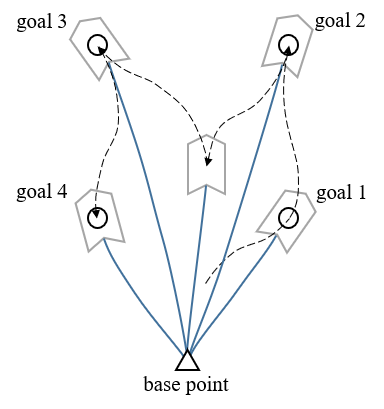}\label{fig:fig1b}
}
\subfigure[path with self-entanglement]{
\includegraphics[width=0.22\textwidth]{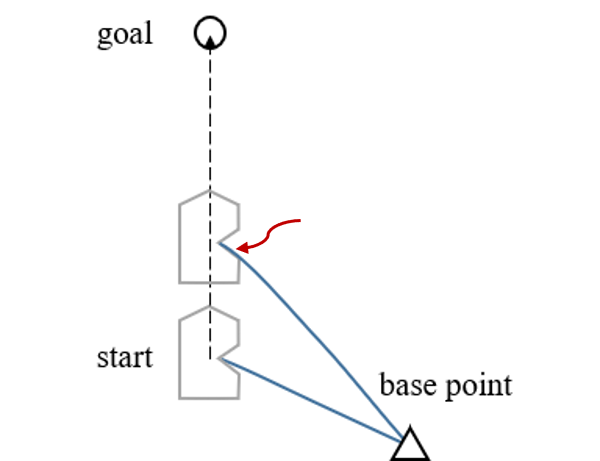}\label{fig:fig1c}
}
\subfigure[self-entanglement-free path]{
\includegraphics[width=0.22\textwidth]{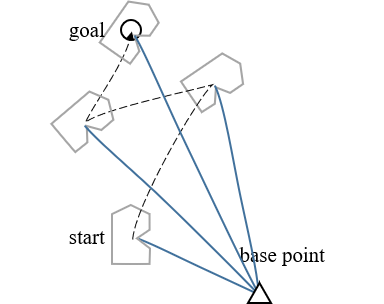}\label{fig:fig1d}
}
\caption{
Examples of various tethered robot paths to visit the predefined goals, depicted as small circles. 
The base point anchoring the tether is depicted as a triangle. 
For simplicity, no obstacle is presented, and the maximum tether length constraint is ignored. 
The mobile unit (depicted as a polygon) is differential-driven, with the tether (blue curve) being hooked (a)(b) to the back and (c)(d) to the right. 
(a) In this case, the circular path is valid for an untethered robot but not suitable for a tethered robot. 
The tether would contact the rear wheel after the robot visits goal $2$, leading to self-entanglement. 
(b) To avoid self-entanglement, the robot, after visiting goal $2$, executes a backward motion to an intermediate pose that allows for the subsequent self-entanglement-free path. 
(c) In this case, for a robot whose tether extends to the right, even a straight forward path would result in self-entanglement. 
The admissible tether retracting orientation at the tether-robot anchoring point is depicted by the notch on the robot footprint.  
(d) A self-entanglement-free resulting path for the case described in (c). 
}\label{fig:fig1}
\end{figure}

Tethered robots, which are mobile robots equipped with a tether anchoring to a fixed base point, have natural advantages in maintaining stable communication links and ensuring continuous power and material supplies. 
This makes them highly suitable for executing energy-intensive tasks and operating in environments where wireless communication is unreliable or unfeasible. 
This is often the case such as sewer pipe inspection~\cite{Nassiraei2007Concept}, highway maintenance~\cite{Hong1997Tethered}, coverage tasks~\cite{Shnaps2014Online}~\cite{Mechsy2017Novel}~\cite{Sharma20192}, disaster recovery missions~\cite{Pratt2008Use}, mountain climbing tasks~\cite{Abad2011Motion}~\cite{Tanner2013Online}, and exploration tasks~\cite{Shapovalov2020Exploration}. 
However, in most of these scenarios, the deployed mobile robots were not originally designed for tethered applications. 
As a result, they lack an omni-directional tether-robot anchoring mechanism. 
In this case, a phenomenon referred to as \textit{self-entanglement} arises which impacts the safety of the tethered robot movement: 
If the mobile unit executes cyclic rotations, then the tether would entangle with the mobile unit. 
The self-entanglement problem fundamentally stems from the inappropriate relative angle between the robot's heading direction and the tether stretching direction. 
See Fig.~\ref{fig:fig1} for the visual illustration of how the tether orientation may influence the movement of a tethered differential-driven robot with the aim to avoid self-entanglement. 
Furthermore, due to the force of gravity, physically the tether will always sag. 
Consequently, self-entanglement may further translate to the wheels rolling over the tether, resulting in unexpected tethered robot configurations (with a self-crossing tether shape) and causing damage to the tether structure. 
Addressing this specialised path planning problem for tethered robots, referred to as the \textit{Self-Entanglement-Free Tethered Path Planning} (SEFTPP) task, is the objective of this paper. 

Given a self-entanglement-free (SEF) tethered robot configuration, the admissible robot motion must be conditioned by not only established modules like collision avoidance, differential-driven robot kinematic constraints, and maximum tether length constraint, but also by the tether stretching direction. 
The necessity of maintaining the SEF property makes the SEF path solution non-intuitive even for seemingly straightforward tasks, as illustrated by the SEF paths in Fig.~\ref{fig:fig1b} and Fig.~\ref{fig:fig1d}. 
It is noteworthy that all existing works in the field of tethered robot planning (to be recounted in Section~\ref{section_related_works}) have assumed scenarios involving either particle robots or omni-directional (2D) robots. 
In these cases, the self-entanglement problem was safely ignored. 
However, such assumptions are far from practical when dealing with real-world challenging applications. 
And one can expect that, restricting the admissible robot heading orientation will significantly constrain the range of valid movements, making the SEFTPP problem non-trivial and more complex compared to untethered planning problems. 

This work advocates for reporting the first solution to the guaranteed self-entanglement-free path for a tethered differential-driven robot. 
The proposed algorithm departs from all existing tethered robot planners. 
It explicitly takes into account the orientation difference between the robot's heading direction and the tether stretching direction, which naturally motivates a constrained path planner to solve the SEFTPP problem. 
The contributions of this paper can be summarised as: 
\begin{enumerate}
\item The modelling of the SEFTPP problem into a constrained path planning problem, with explicit consideration of the relative angle between the robot's heading direction and the tether stretching direction. 
\item A constrained searching-based SEFTPP solution for tethered differential-driven mobile robots. 
\item The proofs demonstrating that under specific conditions during the node expansion phase of the searching-based path planner, the validity of the endpoint configurations of a primitive path ensure the validity of all intermediate waypoint configurations. 
As a result, there is no necessity to explicitly construct any waypoint configuration, nor is there a need to check their validity individually. 
This property is denoted as the \textit{sparsity} of the waypoint configuration validity checking. 
\item The open-sourcing~\footnote{\url{https://github.com/ZJUTongYang/seftpp}} of the algorithm. 
\end{enumerate}

The remainder of this paper is organised as follows. 
Section~\ref{section_related_works} reviews and contextualises the problem within the existing literature. 
Section~\ref{section_problem} formally models the SEFTPP problem. 
Section~\ref{section_algorithm} delves into details to describe the proposed solver to generate the SEFTPP solution.
The sparsity of waypoint configuration validity checking during node expansion is discussed in Section~\ref{section:sparsity}.
Experimental illustrations and comparisons are collected in Section~\ref{section_exp}, with final concluding remarks gathered in Section~\ref{section_conclusion}. 

\section{Related Works}\label{section_related_works}

The \textit{tethered path planning} (TPP) task has been intensively investigated in the past decades. 
Numerous applications of tethered robots have been explored in various domains~\cite{Hong1997Tethered}~\cite{Nassiraei2007Concept}~\cite{Pratt2008Use}~\cite{Abad2011Motion}~\cite{Tanner2013Online}~\cite{Shnaps2014Online}~\cite{Shapovalov2020Exploration}. 
Main concentrations have been paid on constructing the shortest tethered robot path complying with the maximum tether length constraint. 
It has been observed~\cite{Bhattacharya2012Topological} that the tether states will be non-homotopic if the robot reaches the same goal following paths in different topological routes, and the shortest path for an untethered robot is in all likelihood untrackable by a tethered robot. 

Early work confined their scope to polygonal environments~\cite{Teshnizi2014Computing}~\cite{Salzman2015Optimal} so that the algorithmic complexity can be calculated~\cite{Xavier1999Shortest}~\cite{Brass2015Shortest} as a polynomial of the number of straight segments in the initial tether state and the number of obstacle vertices. 
In recent years, a notable advancement is the ability to distinguish between different tethered robot configurations with the same mobile unit pose. 
This is achieved by calculating the homotopy classes of the tether states. 
And the most popular solutions to find the tethered robot path are based on the path-finding within the pre-calculated set of all valid configurations, referred to as the homotopy augmented graph~\cite{Bhattacharya2012Topological}. 
Later, with the utilisation of a locally obstacle-free shortest path planner, the deformation of robot tether becomes easier to estimate. 
This improvement enhances the efficiency of the pre-calculation process~\cite{Kim2014Path}~\cite{Kim2015Path}.
Moreover,~\cite{Teshnizi2014Computing} pre-computed the reachable cell for tethered robots, which improves the efficiency of querying process during the planning phase. 

It is important to note that all the previously mentioned works have focused on the scenarios involving either particle robots or omni-directional robots. 
In these studies, the primary consideration was the tether entanglement with environmental obstacles, ignoring the self-entanglement which is the main motivation of this work. 
In this paper, special treatment of the tether self-entanglement will be incorporated into the proposed algorithm, leading to the generation of robot paths that are guaranteed to be free from self-entanglement, filling in the gap between simulated tethered path planners and real-world execution. 

\section{Problem Modelling}\label{section_problem}

\begin{figure}[t]
\centering
\includegraphics[width=0.35\textwidth]{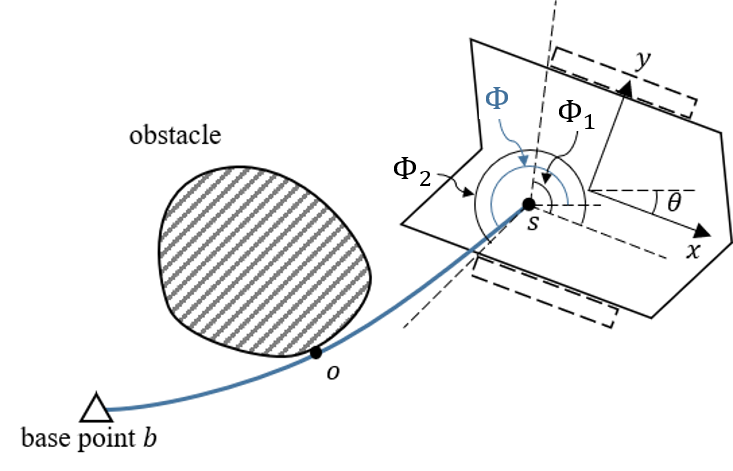}
\caption{Illustration of definitions and notations. }\label{fig:robot}
\end{figure}

This section introduces notations for the environmental settings and the kinematics of the tethered differential-driven robot, along with the formulation of the self-entanglement-free tethered path planning (SEFTPP) problem. 

\subsection{Definitions and Robot Kinematics}
Let $M\subset \mathbb{R}^2$ represent the environment where the tethered robot operates. 
The footprint of the robot's mobile unit is assumed as a polygon. 
The anchoring point of the tether on the robot, defined within the robot's egocentric frame, is denoted as $s$. 
The other endpoint of the tether is anchored at a fixed base point in the environment, which is denoted as $b$. 
The tether is assumed to be taut, allowing its shape to be characterised by a sequence of contact points between the tether and the vertices of environmental obstacles. 
The last tether-obstacle contact point is denoted as $o$. 
Please refer to Fig.~\ref{fig:robot} for a visual illustration of these notations. 

\begin{definition}
(Configuration) The configuration, denoted as $c$, of a tethered differential-driven robot consists of two components: the pose of the differential-driven robot and the shape of the tether. 
It is represented as:
\begin{equation}
c = \{x, y, \theta, O\}
\end{equation}
where $x$, $y$, and $\theta$ represent the $SE(2)$ pose of the mobile unit of the tethered robot, and $O$ maintains a record of the tether-obstacle contact points, from the base point $b$ to the last tether-obstacle contact point $o$.  
\end{definition}

\begin{definition}
(Relative Angle) Under the assumption that the tether remains taut, the direction of tether retraction, denoted as $\phi$, is estimated as:
\begin{equation}
\phi = \frac{\vec{so}}{\parallel \vec{so}\parallel}
\end{equation} 
then $\Phi$ is defined as the relative angle between the tether retracting direction and the robot's heading direction, 
\begin{equation}
\Phi = \phi - \theta
\end{equation}
\end{definition}

Given the geometric structure of the tethered robot, without an omni-directional tether-robot anchoring mechanism, the range of admissible relative angles, denoted as $\Phi$, is not $[0, 2\pi)$ but falls within an interval denoted as $[\Phi_1, \Phi_2]$. 
This forms the basis of the self-entanglement-free property, as discussed in the next subsection. 

\subsection{Self-Entanglement-Free Tethered Path Planning}
Given the initial configuration of the robot $c_s = (x_s, y_s, \theta_s, O_s)$ and the goal location $p_{\rm goal}=(x_g, y_g)$, the solution to the SEFTPP problem is a curve of mobile unit waypoints, represented as:
\begin{equation}
\begin{aligned}
\alpha: [0, 1]\rightarrow &\mathbb{R}^2,\ t\mapsto (x(t), y(t))\in \mathbb{R}^2, t\in [0, 1]\\
&s.t.\ x(0) = x_s, y(0) = y_s, x(1) = x_g, y(1) = y_g
\end{aligned}
\end{equation}
where each \textit{induced} configuration
\footnote{The robot's heading direction $\theta(t)$ is the tangent of the path. 
The tether state $O$ is calculated by the locally obstacle-free tautening of the concatenation of the robot path and the trace of the tether-robot anchoring point $s$. }
$(x(t), y(t), \theta(t), O(t))$ is \textit{valid}, i.e., subject to the following conditions:
\begin{enumerate}
\item \textbf{(Collision-free):} The mobile unit at $(x(t), y(t), \theta(t))$ remains free of collisions. 
\item \textbf{(Tether-length-admissible, TLA):} The length of tether remains shorter than the maximum allowable tether length. 
\item \textbf{(Non-selfcrossing, NS):} The robot is prohibited from traversing across the tether. 
\item \textbf{(Self-entanglement-free, SEF):} The SEF condition is defined by the boundedness of the \textit{relative angle function} which is defined as
\begin{equation}\label{eqn:Phi}
\Phi(t) = \arctan\left( \frac{o_y(t) - \tilde{y}(t)}{o_x(t)-\tilde{x}(t)} \right) - \theta(t),\ t\in [0, 1]
\end{equation}
where $(\tilde{x}(t), \tilde{y}(t))$ is the position of $s$ and $(o_x(t), o_y(t))$ is the position of $o$. 
The SEF condition mandates that this function is bounded by the admissible interval: 
\begin{equation}\label{equ:sef}
\Phi_1 \leq {\rm wrapToPi}(\Phi(t)) \leq \Phi_2, \forall t\in [0, 1]
\end{equation}
\end{enumerate}

It should be noted that SEFTPP has been framed as a path planning problem with multiple potential goals. 
In this context, the goal is represented not as a single configuration but as a 2D location in the environment. 
There can be multiple ``goal configurations" that correspond to a given ``goal location". 
The path planning task is deemed finished as soon as the robot reaches any of these goal configurations. 
This setting is justified because, regarding the target configuration of the tethered robot, the admissibility of the robot's final heading direction $\theta$ (if assigned) is inherently determined by the retracting direction of the tether. 
However, the final shape of the tether, which is further constrained by the tether-length-admissible property, cannot be intuitively determined based solely on human empirical knowledge. 
Therefore, it is a reasonable practice for users to provide a goal location without fully specifying a goal configuration. 

\begin{algorithm}[t]
    \caption{Self-Entanglement-Free Tethered Path Planner}\label{alg:main_planner}
    \begin{algorithmic}[1]
        \Require Map $M$, Initial configuration $c_0$, Goal location $p_{\rm goal}$, Base point $b$, Maximum tether length $L$, Resolution $x_{\rm res}, y_{\rm res}, \theta_{\rm res}$
        \Ensure Resultant path $R$ 
\State \% Initialize Data Structure
\State $\{\zeta_i\}$ = getRepresentativePointOfObstacles($M$)
\State $n_0$ = initialiseNode($c_0$)\label{line:init_start}
\State $[x_d, y_d, \theta_d]$ = getIndex($n_0, x_{\rm res}, y_{\rm res}, \theta_{\rm res}$)
\State $V(x_d, y_d, \theta_d).{\rm push}(n_0)$ \% grid-based discretisation
\State $Q.{\rm push}(n_0)$ \% The priority queue\label{line:init_end}
\While{$\sim$isempty($Q$)}
\State $n_{\rm cur}$ = $Q.{\rm pop}()$
\If{isGoalReached($n_{\rm cur}$, $p_{\rm goal}$)}
\State $R$ = [tracePath($V$, $n_{\rm cur}$);$p_{\rm goal}$]
\State \textbf{return} $R$
\EndIf
\State $N_{\rm succ}$ = nodeExpansion($n_{\rm cur}$)\label{line:getSuccessor}
\For{each element $n_{\rm succ}$ in $N_{\rm succ}$}
\State $[x_{\rm ds}, y_{\rm ds}, \theta_{\rm ds}]$ = getIndex($n_{\rm succ}, x_{\rm res}, y_{\rm res}, \theta_{\rm res}$)
\State $n_{\rm homo}$ = findHomoNode($n_{\rm succ}$,$V(x_{\rm ds}$,$y_{\rm ds}$,$\theta_{\rm ds})$)\label{line:findSimilarNode}
\If{$n_{\rm homo}$ = $\varnothing$}
\State $Q.{\rm push}(n_{\rm succ})$
\State $V(x_{\rm ds}, y_{\rm ds}, \theta_{\rm ds}).{\rm push}(n_{\rm succ})$
\Else
\If{$n_{\rm succ}.gCost < n_{\rm homo}.gCost$}
\If {$n_{\rm homo}$ is in $Q$}
\State remove $n_{\rm homo}$ from $Q$
\EndIf
\State $Q.{\rm push}(n_{\rm succ})$
\State $V(x_{\rm ds}, y_{\rm ds}, \theta_{\rm ds}).{\rm push}(n_{\rm succ})$
\EndIf
\EndIf
\EndFor
\EndWhile
\State $R = \varnothing$
\State \Return $R$
\end{algorithmic}  
\end{algorithm}

\section{Algorithm}
\label{section_algorithm}

In this section, the SEFTPP problem is effectively addressed using a constrained path searching algorithm. 
The pseudo code of the proposed algorithm is shown in \textbf{Algorithm~\ref{alg:main_planner}}. 

\subsection{Node Definition}
The proposed algorithm constructs a searching tree of valid tethered robot configurations that satisfy all the constraints stated in the previous subsection. 
The path searching process is similar to the constrained searching-based optimal path planner~\cite{Dolgov2010Path}~\cite{Bhattacharya2012Topological}. 
To formally present this, the term \textit{node} is defined as follows: 

\begin{definition}
(Node) A \textit{node} during the pathfinding consists of the following elements: 
\begin{equation}
\begin{aligned}
n = \{&\{i, x, y, \theta, O, s, \phi, h\} (\mbox{configuration related}), \\
&\qquad \{gCost, hCost, i_{\rm prev}\} (\mbox{searching-tree related}), \\
&\qquad\qquad \{steer, dir\} (\mbox{cost related})\} \\
\end{aligned}
\end{equation}
where $i$ is the index of the node, $\{x, y, \theta, O\}$ is the pose of the mobile unit. 
The location of tether-robot anchoring point $s$ and the tether stretching orientation $\phi$ are derived variables from the robot configuration. 
$h$ is the $h$-signature of the robot configuration, which is also a derived variable and whose calculation will be elaborated upon later. 
Other components are introduced specifically for node expansion, which are presented in the next subsection. 
\end{definition}

\subsection{Node Expansion}
At the beginning of the algorithm, a representative point for each obstacle is firstly distinguished, denoted as $\zeta_1, \cdots, \zeta_n$. 
Then, parallel non-overlapping rays are constructed, denoted as $r_1, \cdots, r_n$. 
The initial configuration of the robot (which must satisfy all constraint) is employed to construct the first node. 
This node, denoted as $n_0$, is constructed with cost-to-move $n_0.gCost = 0$ and $h$-signature $n_0.h = \varnothing$. 
It is then pushed into a priority queue. 
In each iteration, the node $n_{\rm cur}$ with the lowest cost is popped from the queue, and all of its child nodes are generated. 
Multiple primitive paths may be applicable. 
In the particular SEFTPP case, given that in-situ rotational movements rarely comply with the SEF constraint, the primitive paths are selected as car-like circular paths, parameterised by $dis$ (path length), $dir$ (where $1$ represents forward and $-1$ represents backward), and $steer$ (which determines the turning radius). 
The validity of a primitive movement is rigorously examined, with ``validity" encompassing not only the collision-free property, tether-length-admissibility, tether non-selfcrossing property, and self-entanglement-free property of the child node, but also extending to all intermediate waypoint configurations along the primitive path. 
The set of valid child nodes is denoted as $N_{\rm succ}$. 
The node expansion process is detailed in \textbf{Algorithm~\ref{alg:succ}}~\footnote{The module is the same as the Algorithm 2 in~\cite{Yang2023Self}, but is presented in a slightly lengthy form for the easy re-usage in the \textbf{Algorithm~\ref{alg:improved_node_expansion}} of this paper.
Also, the pseudocode is arranged next to \textbf{Algorithm~\ref{alg:improved_node_expansion}} for easy comparison. }. 

\begin{figure}[t]
\centering
\includegraphics[width=0.38\textwidth]{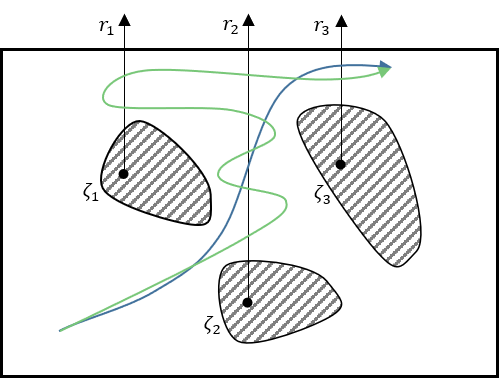}
\caption{Illustration of $h$-signature calculation. 
In this example, the word representation of the green path and the blue path are ``$r_2r_2^{-1}r_2r_2^{-1}r_1^{-1}r_1r_2r_3$" and ``$r_2r_3$", respectively, which are equivalent under the process of word reduction. 
Consequently, the two paths are homotopic. If the tethered robot moves along the two paths, the configurations will be exactly the same.  
}
\label{fig:hsign}
\end{figure}

For each valid child node, elements of the robot configuration is calculated, denoted as $n_{\rm succ}.x$, $n_{\rm succ}.y$, $n_{\rm succ}.\theta$, $n_{\rm succ}.O$. 
If the primitive path traverses across $r_i$ from left to right, then $r_i$ is appended to the $h$-signature, whilst $r_i^{-1}$ is appended if the crossing is from right to left. 
If $r_i$ and $r_i^{-1}$ are adjacent, then they are reduced. 
Through this process, $n_{\rm succ}.h$ is calculated based on $n_{\rm cur}.h$. 
See Fig.~\ref{fig:hsign} for illustration. 
The author is referred to~\cite{Kim2014Path} for a more theoretical explanation. 
Various non-negative movement cost functions may be applicable for calculating the cost. 
In our implementation, the cost is calculated as a composite function that considers the robot's travelling distance, changes in steering angle, and alternations in moving direction, as 
\begin{equation}
\begin{aligned}
n_{\rm succ}.gCost = &n_{\rm cur}.gCost + k_1 dis\\
&+ k_2\parallel n_{\rm succ}.steer - n_{\rm cur}.steer\parallel \\
&+ k_3\parallel n_{\rm succ}.dir - n_{\rm cur}.dir\parallel\\
&\qquad \qquad k_1, k_2, k_3 > 0
\end{aligned}
\end{equation}
where $k_1$, $k_2$, and $k_3$ are parameters. 
The index of the parent node is stored in the child node as $n_{\rm succ}.i_{\rm prev}$. 
The estimated cost-to-go, denoted as $n_{\rm succ}.hCost$, is calculated as the Euclidean distance to the goal. 
Nodes are managed in a resolutionally complete manner, meaning that the space of the mobile unit's poses $(x, y, \theta)$ is discretised based on pre-defined grid solutions $x_{\rm res}$, $y_{\rm res}$, $\theta_{\rm res}$. 
In cases where multiple nodes with the same $h$-signature reside in the same grid, only the one with the lowest cost is preserved. 
However, it is permitted to have multiple nodes with pairwise distinct $h$-signature within the same grid. 

The algorithm iteratively expands the least-cost node in the queue, until it reaches the goal location. 
A valid path is then reported by back-tracing through the nodes following the child-parent relation. 

\subsection{Discussions on Completeness and Distance-Optimality}

\textbf{Completeness. }
The completeness property of a path planning algorithm means that the algorithm is guaranteed to either find a resultant path or to determine that no path exists in finite time. 
The completeness of the proposed algorithm follows the similar vein as that of the Hybrid A*~\cite{Dolgov2010Path} algorithm. 
By discretising the map into small grids (regarding $x$, $y$, and $\theta$), allowing the existence of multiple nodes with distinct $h$-signature within the same node, and choosing short primitive paths, the algorithm can effectively find a resultant path within a reasonable computation time. 
However, there is no formal guarantee that the searching branch of the proposed algorithm can explore all grids within the same connected component of the grid containing the initial node. 

\textbf{Optimality. }
The proposed algorithm is sub-optimal. 
Evaluating the quality of the resultant path involves two perspectives: local (among all paths within the same homotopy class of paths) and global (among the best-possible paths found in each individual homotopy class). 
From a local standpoint, the optimality of the proposed algorithm is sub-optimal. 
This is because if multiple nodes in the same grid have homotopic tether shapes, the proposed algorithm only preserves the least-cost one, disregarding the others. 
This behaviour aligns with observations made in prior searching-based algorithms~\cite{Dolgov2010Path}. 
From the global perspective, the proposed algorithm maintains path searching branches in all homotopy class of paths, enabling it to identify and compare sub-optimal paths across multiple homotopy classes and select the best one. 

\section{Sparse Waypoint Validity Checking}\label{section:sparsity}

The proposed path searching algorithm adopts straight movement and arc-like movements as primitive paths. 
However, a critical concern is how the validity of the primitive paths can be efficiently validated. 
The most straightforward strategy is discretising the primitive path into a sufficiently dense sequence of waypoint configurations of the tethered robot, based on a set distance resolution, and verifying the validity of each individual waypoint configuration. 
Nonetheless, this process is extremely inefficient due to the computational cost associated with the explicit calculation of the tether shape for each waypoint configuration. 
Given that the validity checking module is executed during the expansion of every child node, the inefficiency of this module directly impacts the efficiency of the overall algorithm. 
In this section, it is proven that under specific conditions, there is no necessity to examine the validity of the waypoint configurations: they are guaranteed to be valid. 
This is referred to as the \textit{sparsity} of the validity checking. 

To simplify the discussions presented in this section, our scope is limited to the situations where the contact points between the tether and obstacles remain unchanged throughout the primitive motion. 
In this regard, we first show the existence of a method for identifying the constancy of tether-obstacle contact points. 
\begin{lemma}
(Unchanged Tether-Obstacle Contact Points) 
Let the starting configuration be $c_0 = (x_0, y_0, \theta_0, O_0)$ and the ending configuration be $c_1 = (x_1, y_1, \theta_1, O_1)$. 
If 
\begin{enumerate}
\item $O_0$ and $O_1$ are the identical. 
The last tether-obstacle contact point is referred to as $o$. 
\item Using $o$ and the path of $s$ to generate a convex hull, this convex hull is obstacle-free. 
\end{enumerate}
then the tether-obstacle contact points will remain unchanged throughout the primitive motion. 
\end{lemma}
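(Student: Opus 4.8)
The plan is to freeze the part of the tether that cannot move and to show that the only part that can move --- the final segment from the last contact point $o$ to the anchoring point $s$ --- stays a single straight, obstacle-free segment for the whole primitive motion. Write $\gamma\subset\mathbb{R}^2$ for the trace of $s$, $H=\mathrm{conv}(\{o\}\cup\gamma)$ for the convex hull in hypothesis~(2), and let $\tau_{\mathrm{pre}}$ be the portion of the taut tether from $b$ to $o$ at $t=0$; since $O_0=O_1$, the same $\tau_{\mathrm{pre}}$ is also the $b$-to-$o$ portion at $t=1$. Throughout I use the standard facts that the taut tether is the (unique) shortest obstacle-free curve in its homotopy class rel.\ endpoints, that a sub-curve of a shortest curve is shortest, and that as $s$ varies continuously the taut tether varies continuously as a curve (even though its combinatorial contact sequence may jump).

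First I would dispose of the easy direction: no contact point is created between $o$ and $s$. For every $t$, $s(t)\in\gamma\subseteq H$ and $o\in H$, so by convexity $\overline{o\,s(t)}\subseteq H$, hence $\overline{o\,s(t)}$ is obstacle-free; being the straight segment it is the globally shortest $o$-to-$s(t)$ curve and therefore is exactly the taut tether from $o$ to $s(t)$, carrying no interior contact point. Next I would argue that the contact sequence from $b$ to $o$ cannot change \emph{while $o$ remains the last contact point}: in that regime that portion is a shortest obstacle-free curve from the fixed point $b$ to the fixed obstacle vertex $o$ in a fixed homotopy class (the class cannot change because the only obstacle-free deformation available acts on the final segment, which meets this portion only at $o$), hence equals $\tau_{\mathrm{pre}}$. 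Combining these two observations reduces the lemma to the single claim: $o$ is a genuine contact point of the taut tether for every $t\in[0,1]$.

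For this claim I would run a connectedness argument. Let $T=\{t\in[0,1]:O(t)=O_0\}$; by hypothesis~(1), $0,1\in T$, and $T$ is open because at an interior point of $T$ the wrap angle at $o$ lies strictly in $(0,\pi)$ and the final segment has positive clearance (it lies inside the obstacle-free $H$), so small perturbations of $s$ preserve the contact sequence. It then suffices to show $T$ is closed, i.e.\ that the wrap angle at $o$ never reaches the degenerate value $\pi$ (the value $0$ keeps $o$ genuine, or can be excluded by general position). Here the convex hull is used a second time: viewed from the extreme vertex $o$ of $H$, the points of $\gamma\subseteq H$ subtend a \emph{connected} angular interval of width $<\pi$, containing the directions $\overrightarrow{o\,s_0}$ and $\overrightarrow{o\,s_1}$; for both of these the straight segment from the (fixed) penultimate contact $o^-$ to $s$ is blocked by the obstacle body at $o$, which is exactly what makes $o$ genuine at $t=0,1$. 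The obstacle body occupies, as seen from $o$, the complementary sector, so the set of $s$-directions for which $\overline{o^-\,s}$ is blocked is itself an interval; since that interval contains the two endpoint directions and the directions realised along $\gamma$ form an interval of width $<\pi$ squeezed between them and lying inside $H$ (hence outside the obstacle sector), every $\overrightarrow{o\,s(t)}$ keeps $\overline{o^-\,s(t)}$ blocked, so $o$ stays genuine and $T=[0,1]$.

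The main obstacle is precisely this last step: turning ``the obstacle-free convex hull confines $\gamma$ to the free side of the corner at $o$'' into a clean argument that the wrap angle never degenerates, while correctly handling (i) the cases where $o^-$ coincides with $b$ or where $H$ degenerates to a segment, and (ii) boundary configurations with wrap angle $0$ or $\pi$. An alternative that hides the angular bookkeeping is to exhibit the candidate tether $\hat\tau(t)=\tau_{\mathrm{pre}}\cdot\overline{o\,s(t)}$ directly: it is obstacle-free and homotopic rel.\ endpoints to the true taut tether $\tau(t)$ for every $t$ (both are joined, through continuous obstacle-free families sharing the endpoint path $s(\cdot)$, to the common curve $\tau(0)=\hat\tau(0)$), so $\mathrm{len}\,\tau(t)\le\mathrm{len}\,\hat\tau(t)$; proving the reverse inequality amounts to showing $\hat\tau(t)$ has no shortcut, and since $\tau_{\mathrm{pre}}$ and $\overline{o\,s(t)}$ are already shortest, the only possible shortcut is at the junction $o$ --- i.e.\ the statement that $o$ remains genuine. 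Thus the variational formulation meets the combinatorial one at the same crux, but may be cleaner to write up.
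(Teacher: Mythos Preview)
Your treatment goes well beyond the paper's. The paper's entire proof is a single sentence: because the segment from $o$ to $s(t)$ always lies in the obstacle-free convex hull, the final part of the tether never meets an obstacle. That is precisely your ``easy direction'' (no new contact is created past $o$), and the paper stops there --- it does not argue that the contact at $o$ cannot be released mid-motion, nor that the $b$-to-$o$ portion stays frozen. You correctly flag these as the substantive content of the lemma and build an open--closed argument on $T=\{t:O(t)=O_0\}$ (with a variational alternative) to handle them. So your route is not wrong but genuinely more rigorous than the source, which is content to treat the lemma as an intuitive geometric check adequate for the short primitive arcs that actually arise in the algorithm; what your version buys is an honest proof of the statement as written, at the cost of the angular bookkeeping you identify as the crux.

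One small caution on your closure step: the directions $\overrightarrow{o\,s(t)}$ are confined to the tangent cone of $H$ at the extreme vertex $o$, but they need not lie in the sub-interval between the two endpoint directions $\overrightarrow{o\,s_0}$ and $\overrightarrow{o\,s_1}$, so ``squeezed between them'' overstates the geometry. What the obstacle-freeness of $H$ actually gives you is that the obstacle wedge at $o$ is disjoint from that whole tangent cone, which is the fact that keeps the bend at $o$ on a consistent side for every $s(t)\in H$; phrasing the argument that way closes the gap without needing the interval to be sandwiched.
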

\begin{proof}
Because the final part of the tether (between $o$ and $s$) always resides within this convex hull, it will not encounter any obstacle. 
\end{proof}

\subsection{The Sparsity of Validity Checking}

Typically, there are four properties that must be checked to determine whether a primitive path is valid: the collision-free property, the non-selfcrossing (NS) property, the self-entanglement-free (SEF) property, and the tether-length-admissible (TLA) property. 

To begin with, verifying the collision-free property only involves the verification that the footprint of the mobile unit does not intersect with any obstacle during the motion. 
It is essentially the collision checking module used for untethered differential-driven robots which has been an established module. 
Next, it is observed that the starting segment of the tether (from the base point $b$ to $o$) remains unchanged throughout the motion. 
Therefore, verifying the non-selfcrossing property is simply implemented as verifying the collision-free property between the robot path and the starting part of the tether (from $b$ to $o$). 
The non-selfcrossing starting configuration implies that the mobile unit will hit the static part of the tether before the tether becomes selfcrossing. 
See Fig.~\ref{fig:selfcrossing} for illustration. 

Checking the SEF property and the TLA property of a primitive path are challenging because these tasks are directly related to the deformed shape of the final part of the tether (from $o$ to $s$). 
As a result, the kernel of the sparsity of the validity checking is exploring conditions under which the properties of the primitive path can be fully characterised by the endpoint configurations. 
This is formally presented as follows. 

\begin{figure}[t]
\centering
\includegraphics[width=0.4\textwidth]{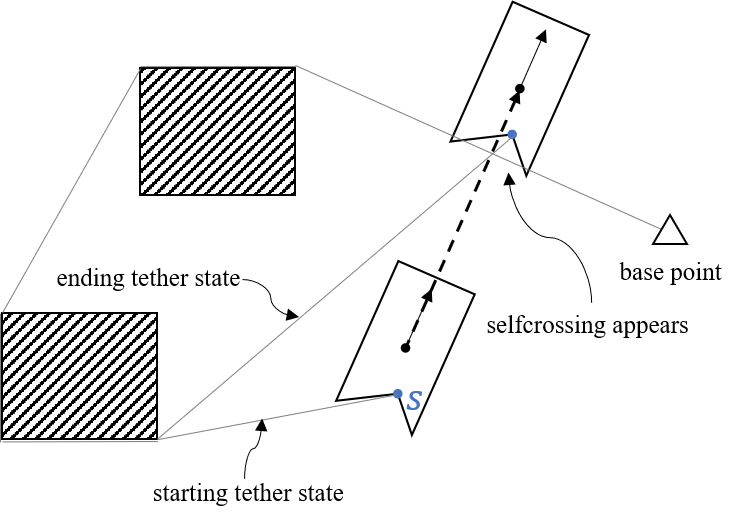}
\caption{Illustration of a selfcrossing robot path.}
\label{fig:selfcrossing}
\end{figure}

\begin{definition}\label{def:sparsity}
(Sparsity of Validity Checking) 
Let the robot path be a straight path or a circular path. 
The starting configuration is given as $c_0 = (x_0, y_0, \theta_0, O)$. 
The ending configuration is denoted as $c_1$. 
The sparsity of validity checking is defined as the sufficiency that
\begin{align}
c_0\mbox{ and }c_1\mbox{ are SEF}&\Rightarrow \mbox{ all waypoints are SEF}\label{eqn:sef_condition}\\
c_0\mbox{ and }c_1\mbox{ are TLA}&\Rightarrow \mbox{ all waypoints are TLA}\label{eqn:tla_condition}
\end{align}
\end{definition}

When the sparsity is established, there is no requirement to explicitly construct waypoint configurations, leading to significant computational time savings. 
The circumstances under which the sparsity is confirmed, in other words, the sufficient conditions for the sparsity, are discussed in the subsequent subsections. 
To enhance clarity, the following notations are formally recalled. 

\begin{definition}
($\Delta x$, $\Delta y$) The location of the tether-robot contact point $s$ in the robot's egocentric frame is denoted as ($\Delta x$, $\Delta y$). 
\end{definition}

\begin{definition}
($\tilde{x}$, $\tilde{y}$) The location of $s$ in the world frame is denoted as ($\tilde{x}$, $\tilde{y}$). 
\end{definition}

\begin{definition}
($o_x$, $o_y$) The location of the last tether-obstacle contact point $o$ in the world frame is denoted as ($o_x$, $o_y$). 
\end{definition}

\begin{figure}[t]
\centering
\subfigure[Illustration of a tethered straight primitive path. ]{
\includegraphics[width=0.48\textwidth]{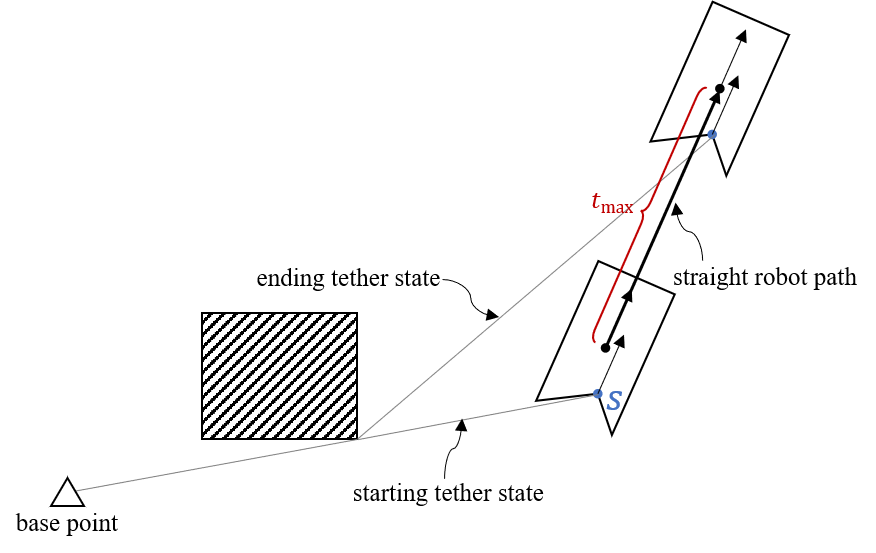}
\label{fig:primitive_motions:a}
}
\subfigure[Illustration of a tethered forward and left-turning primitive path. ]{
\includegraphics[width=0.48\textwidth]{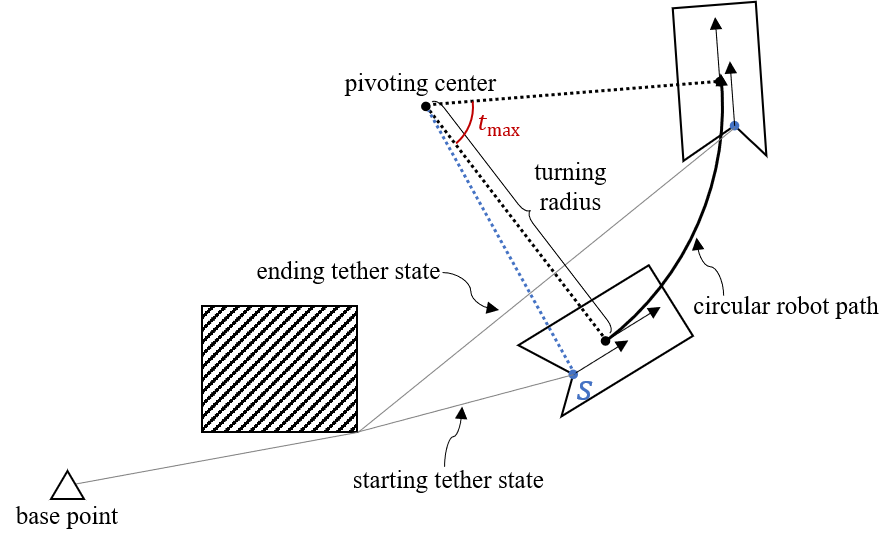}
\label{fig:primitive_motions:b}
}
\caption{Illustration of the starting configuration and the ending configuration of two primitive paths with an unchanged last tether-obstacle contact point. 
}\label{fig:primitive_motions}
\end{figure}

\subsection{The Monotonicity of Relative Angle Function}\label{sec:mono_SEF}

On noticing the significance of endpoint configurations in a primitive path, it is natural to consider the cases when the relative angle function and the tether length function are monotonic. 
Taking $\Phi$ as an example. 
If $\Phi$ is monotonic and the boundary values of $\Phi$ are within the admissible interval $[\Phi_1, \Phi_2]$, then all values of $\Phi$ also lie within the admissible interval. 
In other words, the monotonicity of the relative angle function is a sufficient condition that makes Eqn.~(\ref{eqn:sef_condition}) correct.  

As an introductory proposition, the monotonicity of the relative angle function when the robot path is a straight path is proven in detail. 

\begin{theorem}\label{thm:SEF_Straight}
(Relative Angle Monotonicity, Straight) 
Let the robot path be a Straight path. 
Then the relative angle function is monotonic. 
\end{theorem}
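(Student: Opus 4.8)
The plan is to exploit the fact that a straight primitive path keeps the robot's heading fixed, so that monotonicity of $\Phi$ reduces to monotonicity of the direction of the vector $\vec{so}$ alone. First I would parametrise the motion: along a straight path the mobile unit translates with a constant velocity, hence the tether--robot anchoring point $s$ also translates along a straight line, $(\tilde x(t),\tilde y(t)) = (\tilde x(0),\tilde y(0)) + t\,v$ for a constant vector $v$ pointing in the direction of travel. By the standing assumption of this section (the tether--obstacle contact points are unchanged, cf.\ the preceding Lemma) the point $o=(o_x,o_y)$ is fixed throughout the motion. Since the heading is constant, $\theta(t)\equiv\theta_0$, we may write $\Phi(t) = \psi(t) - \theta_0$ where $\psi(t)$ is the (continuously lifted) polar angle of $w(t):=(o_x-\tilde x(t),\,o_y-\tilde y(t))$, so it suffices to show that $\psi$ is monotonic.

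Next I would differentiate. Writing the planar scalar cross product as $a\times b = a_x b_y - a_y b_x$, wherever $w(t)\neq 0$ one has $\dot\psi(t) = \dfrac{w(t)\times \dot w(t)}{\,\| w(t)\|^{2}\,}$. Here $\dot w(t) = -v$ is constant and $w(t) = w(0) - t\,v$, so the numerator equals $w(0)\times(-v) - t\,(v\times v) = -\,w(0)\times v$, which is independent of $t$. Hence $\dot\psi(t)$ has the constant sign of $-\,w(0)\times v$ (or is identically zero) at every $t$ for which $w(t)\neq 0$, which gives monotonicity of $\psi$ and therefore of $\Phi = \psi-\theta_0$. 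Intuitively, this is just the elementary fact that the bearing from a point moving along a line to a fixed off-line point rotates monotonically.

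The step that needs the most care is the degenerate behaviour of the angle, i.e.\ the branch/wrapping issue hidden in the $\arctan$ of Eqn.~(\ref{eqn:Phi}) and the possibility $w(t)=0$. I would argue $w(t)$ never vanishes: $o$ is a vertex of an environmental obstacle while $s$ lies inside the collision-free mobile-unit footprint, so $s$ cannot coincide with $o$ along a valid primitive path; consequently $\psi$ admits a $C^{1}$ continuous lift and the derivative computation above is globally valid, so ``monotonic'' should be read in terms of this continuous lift rather than the wrapped value. The special case $w(0)\times v = 0$ (the line of travel of $s$ passing through $o$) would be handled separately: then $\dot\psi\equiv 0$ and $\psi$ is constant, which is still monotone (and is anyway ruled out by the non-selfcrossing assumption). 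Finally, monotonicity of $\Phi$ together with $\Phi(0),\Phi(1)\in[\Phi_1,\Phi_2]$ immediately yields $\Phi(t)\in[\Phi_1,\Phi_2]$ for all $t\in[0,1]$, establishing the SEF implication~(\ref{eqn:sef_condition}); I would present this consequence as a short remark after the theorem rather than fold it into the statement.
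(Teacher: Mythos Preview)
Your proposal is correct and follows essentially the same approach as the paper: both parametrise the straight motion, differentiate the angle, discard the positive denominator $\|w(t)\|^{2}$, and observe that the remaining numerator (your $-\,w(0)\times v$, the paper's Eqn.~(\ref{eqn:SEF_straight_tilde_Phi})) is independent of $t$, hence $\dot\Phi$ has constant sign. Your vector/cross-product formulation is a cleaner packaging of the same computation, and your explicit handling of the branch lift and the degenerate case $w(t)=0$ is more careful than the paper's treatment.
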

\begin{proof}
See Fig.~\ref{fig:primitive_motions} for illustration. 
The robot path is parameterised as 
\begin{align}
x(t) &= x_0 + t\cos\theta_0\\
y(t) &= y_0 + t\sin\theta_0\\
\theta(t) &= \theta_0
\end{align}
where $t$ is the arc-length parameter, and $t_{\rm max}$ is the length of the straight path. 
Then, the path of of $s$ is 
\begin{align}
&\tilde{x}(t) = x_0 + t\cos\theta_0 + \cos\theta_0\Delta x - \sin\theta_0\Delta y\\
&\tilde{y}(t) = y_0 + t\sin\theta_0 + \sin\theta_0\Delta x + \cos\theta_0\Delta y
\end{align}
The relative angle function is expressed as follows
\begin{equation}
\Phi(t) = \arctan\left( \frac{o_y - \tilde{y}(t)}{o_x-\tilde{x}(t)} \right) - \theta(t),\ t\in (0, t_{\rm max})
\end{equation}
Calculating the derivative of $\tilde{x}$, $\tilde{y}$, and $\theta$ with respect to $t$ gives
\begin{align}
\frac{d\tilde{x}}{dt} &= \cos\theta_0\\
\frac{d\tilde{y}}{dt} &= \sin\theta_0\\
\frac{d\theta}{dt} &= 0
\end{align}
The derivative of $\Phi$ is calculated as 
\begin{equation}
\frac{d\Phi}{dt} = \frac{-\frac{d\tilde{y}}{dt}(o_x-\tilde{x})+\frac{d\tilde{x}}{dt}(o_y-\tilde{y})}{(o_y - \tilde{y})^2 + (o_x - \tilde{x})^2} -\frac{d\theta}{dt}
\end{equation}
Before we advance further, it should be noted that we are only interested in the monotonicity of $\Phi$, specifically the comparison between $\frac{d\Phi}{dt}$ and $0$. 
The denominator $(o_x-\tilde{x})^2 + (o_y - \tilde{y})^2$ is always a positive value, rendering it irrelevant to our focus. 
Hence, the denominator is safely ignored by introducing $\frac{d\tilde{\Phi}}{dt}$ as follows:  
\begin{equation}\label{eqn:SEF_straight_tilde_Phi}
\begin{aligned}
\frac{d\tilde{\Phi}}{dt} \triangleq& \left( (o_x-\tilde{x})^2 + (o_y - \tilde{y})^2\right) \frac{d\Phi}{dt}\\
=& -\frac{d\tilde{y}}{dt}(o_x-\tilde{x}) + \frac{d\tilde{x}}{dt}(o_y-\tilde{y})\\
=& -\sin\theta_0(o_x - x_0 - t\cos\theta_0 - \cos\theta_0\Delta x+\sin\theta_0 \Delta y)\\
& +\cos\theta_0(o_y - y_0 - t\sin\theta_0 - \sin\theta_0\Delta x - \cos\theta_0 \Delta y)\\
=& -\sin\theta_0(o_x - x_0 - \cos\theta_0\Delta x+\sin\theta_0 \Delta y)\\
&+\cos\theta_0(o_y - y_0 - \sin\theta_0\Delta x - \cos\theta_0 \Delta y)
\end{aligned}
\end{equation}
Surprisingly, $\frac{d\tilde{\Phi}}{dt}$ is not a function of $t$, meaning that
\begin{equation}
\frac{d\tilde{\Phi}}{dt} > 0\mbox{ or }\frac{d\tilde{\Phi}}{dt} < 0\mbox{ or }\frac{d\tilde{\Phi}}{dt} = 0,\ \forall t\in (0, t_{\rm max})
\end{equation}
therefore 
\begin{equation}
\frac{d\Phi}{dt} > 0\mbox{ or }\frac{d\Phi}{dt} < 0\mbox{ or }\frac{d\Phi}{dt} = 0,\ \forall t\in (0, t_{\rm max})
\end{equation}
In all the cases, $\Phi$ is monotonic, implying that its maximum and minimum value are achieved at endpoints. 
\end{proof}

For circular primitive paths, the discussion is spanned based on four different path types: Forward and Right-turning, Forward and Left-turning, Backward and Right-turning, and Backward and Left-turning. 

\begin{theorem}\label{thm:SEF_FR}
(Relative Angle Monotonicity, F-R) Let the robot path be a Forward and Right-turning arc. 
The centre angle of the arc is $t_{\rm max}$. 
Then the relative angle function is monotonic if one of the following equations is satisfied, 
\begin{align}
&\frac{A^2+B^2}{\sqrt{C^2+D^2}} > \max\{\cos(t - \theta_0 - \varphi)\},\ \forall t\in (0, t_{\rm max})\label{eqn:FR_cond1}\\
&\frac{A^2+B^2}{\sqrt{C^2+D^2}} < \min\{\cos(t - \theta_0 - \varphi)\},\ \forall t\in (0, t_{\rm max})\label{eqn:FR_cond2}
\end{align}
where 
\begin{align}
&A = o_x - x_0 - R\sin\theta_0\\
&B = o_y - y_0 - R\cos\theta_0\\
&C = A\Delta x + BR + B\Delta y\\
&D = AR +A\Delta y - B\Delta x
\end{align}
and $\varphi$ is the angle such that 
\begin{equation}
\cos\varphi = \frac{C}{\sqrt{C^2+D^2}},\ \sin\varphi = \frac{D}{\sqrt{C^2+D^2}}
\end{equation}
\end{theorem}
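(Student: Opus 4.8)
The plan is to reproduce the argument of Theorem~\ref{thm:SEF_Straight}, the essential new feature being that the mobile unit now has a nonzero angular velocity, so the rescaled derivative is no longer constant in $t$; the conditions \eqref{eqn:FR_cond1}--\eqref{eqn:FR_cond2} are precisely what is needed to prevent its sign from changing. Concretely, I would first parameterise the Forward and Right-turning arc by its centre angle $t\in(0,t_{\rm max})$: the mobile-unit origin moves on a circle of radius $R$ about the instantaneous centre of rotation and $\theta(t)$ is affine in $t$ (decreasing, for a right turn). Because the anchoring point $s$ is rigidly attached to the mobile unit, $s$ also sweeps a circular arc about the \emph{same} centre, of radius $\rho=\sqrt{\Delta x^2+(R+\Delta y)^2}$. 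Consequently $o_x-\tilde x(t)$ and $o_y-\tilde y(t)$ are each a constant minus a sinusoid in $t$, the constants being the coordinates $A,B$ of $o$ relative to the arc centre, which is where $A=o_x-x_0-R\sin\theta_0$ and $B=o_y-y_0-R\cos\theta_0$ come from (see Fig.~\ref{fig:primitive_motions}).

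Second, I would mimic Eqn.~\eqref{eqn:SEF_straight_tilde_Phi}: differentiate $\Phi(t)=\arctan\!\big((o_y-\tilde y)/(o_x-\tilde x)\big)-\theta$, observe that the denominator $(o_x-\tilde x)^2+(o_y-\tilde y)^2$ is strictly positive (the final tether segment has positive length, hence $s\neq o$), and define $\tfrac{d\tilde\Phi}{dt}:=\big[(o_x-\tilde x)^2+(o_y-\tilde y)^2\big]\tfrac{d\Phi}{dt}$, which has the same sign as $\tfrac{d\Phi}{dt}$. Substituting the circular parameterisation and using $\cos^2+\sin^2=1$, the purely quadratic terms cancel and $\tfrac{d\tilde\Phi}{dt}$ collapses to $(A^2+B^2)$ minus a single sinusoid in $t$ of amplitude $\rho\sqrt{A^2+B^2}$. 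Recognising $C=A\Delta x+B(R+\Delta y)$ and $D=A(R+\Delta y)-B\Delta x$ as the dot and cross products of $(A,B)$ with the body-frame position $(\Delta x,\,R+\Delta y)$ of $s$ relative to the centre then yields $\rho\sqrt{A^2+B^2}=\sqrt{C^2+D^2}$ and pins the phase of the sinusoid to $\varphi$ (with $\cos\varphi=C/\sqrt{C^2+D^2}$, $\sin\varphi=D/\sqrt{C^2+D^2}$), so that $\tfrac{d\tilde\Phi}{dt}$ equals, up to an overall sign, $(A^2+B^2)-\sqrt{C^2+D^2}\cos(t-\theta_0-\varphi)$.

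Third, I would conclude. The sign of $\tfrac{d\Phi}{dt}$ is constant on $(0,t_{\rm max})$ if and only if the constant ratio $\tfrac{A^2+B^2}{\sqrt{C^2+D^2}}$ stays strictly on one side of $\cos(t-\theta_0-\varphi)$ over that open interval, i.e. if and only if it exceeds $\max_t\cos(t-\theta_0-\varphi)$ (Eqn.~\eqref{eqn:FR_cond1}) or is smaller than $\min_t\cos(t-\theta_0-\varphi)$ (Eqn.~\eqref{eqn:FR_cond2}). In either case $\Phi$ is monotone along the arc, so its extrema are attained at the two endpoint configurations, which is the claim.

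The main obstacle I expect is the bookkeeping inside the second step: folding the expanded derivative into a single cosine and verifying that its amplitude is exactly $\sqrt{C^2+D^2}$ and its phase exactly $\varphi$ requires carefully tracking the sign conventions of the right turn and of the body-to-world rotation; the remainder is routine trigonometric algebra, in the same spirit as \eqref{eqn:SEF_straight_tilde_Phi}. It is also worth dispatching the degenerate case $\sqrt{C^2+D^2}=0$ (which would force $s$, or $o$, to coincide with the centre of rotation) separately, since the normalisation defining $\varphi$ then fails while $\Phi$ is trivially monotone.
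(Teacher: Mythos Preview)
Your proposal is correct and follows essentially the same route as the paper's Appendix~\ref{sec:appendix_SEF_FR}: parameterise the arc, clear the positive denominator to form $\tfrac{d\tilde\Phi}{dt}$, and reduce it to $(A^2+B^2)-\sqrt{C^2+D^2}\cos(t-\theta_0-\varphi)$, from which the two sign conditions follow immediately. Your geometric reading---$A,B$ as the coordinates of $o$ relative to the turning centre, and $C,D$ as the dot and cross products of $(A,B)$ with $(\Delta x,\,R+\Delta y)$, whence $\sqrt{C^2+D^2}=\rho\sqrt{A^2+B^2}$---is a tidier way to identify the amplitude and phase than the paper's line-by-line expansion, but the underlying computation is the same (note that this interpretation gives $B=o_y-y_0+R\cos\theta_0$, matching the appendix rather than the sign in the theorem statement).
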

\begin{proof}
See \textbf{Appendix~\ref{sec:appendix_SEF_FR}}. 
\end{proof}

\begin{theorem}\label{thm:SEF_FL}
(Relative Angle Monotonicity, F-L) 
Let the robot path be a Forward and Left-turning arc. 
The centre angle of the arc is $t_{\rm max}$. 
Then the relative angle function is monotonic if one of the following equations is satisfied,
\begin{align}
&\frac{A^2+B^2}{\sqrt{C^2+D^2}} > \max\{\cos(t+\theta_0-\varphi)\},\ \forall t\in (0, t_{\rm max})\\
&\frac{A^2+B^2}{\sqrt{C^2+D^2}} < \min\{\cos(t+\theta_0-\varphi)\},\ \forall t\in (0, t_{\rm max})
\end{align}
where
\begin{align}
&A = o_x - x_0 + R\sin\theta_0\\
&B = o_y - y_0 + R\cos\theta_0\\
&C = A\Delta x - BR + B\Delta y\\
&D = AR - A\Delta y +B\Delta x
\end{align}
and $\varphi$ is the angle such that 
\begin{equation}
\cos\varphi = \frac{C}{\sqrt{C^2+D^2}},\ \sin\varphi = \frac{D}{\sqrt{C^2 + D^2}}
\end{equation}
\end{theorem}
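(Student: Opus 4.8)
The plan is to mirror the proof of Theorem~\ref{thm:SEF_FR} almost verbatim, adapting only the sign changes that distinguish a Left-turning arc from a Right-turning one. First I would parameterise the forward-left circular path: with turning radius $R$ and arc-length (equivalently centre-angle) parameter $t$, the mobile unit pose is $x(t) = x_0 + R\sin(\theta_0+t) - R\sin\theta_0$ (or the analogous expression obtained by placing the instantaneous centre of rotation to the \emph{left} of the heading), $y(t)$ the matching sine/cosine combination, and $\theta(t) = \theta_0 + t$ — note the crucial sign difference $\theta(t)=\theta_0+t$ versus $\theta(t)=\theta_0-t$ in the F-R case, which is exactly why $\cos(t+\theta_0-\varphi)$ replaces $\cos(t-\theta_0-\varphi)$ in the statement.

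Next I would write down the world-frame position of the anchoring point $s$ by rotating its egocentric offset $(\Delta x,\Delta y)$ by $\theta(t)$ and translating, giving $\tilde{x}(t),\tilde{y}(t)$ as before. Then I would form the relative angle function $\Phi(t)=\arctan\!\big((o_y-\tilde{y}(t))/(o_x-\tilde{x}(t))\big)-\theta(t)$ and, exactly as in Theorem~\ref{thm:SEF_Straight} and Theorem~\ref{thm:SEF_FR}, multiply $d\Phi/dt$ by the strictly positive denominator $(o_x-\tilde{x})^2+(o_y-\tilde{y})^2$ to get the sign-equivalent quantity $d\tilde{\Phi}/dt = -\tfrac{d\tilde{y}}{dt}(o_x-\tilde{x}) + \tfrac{d\tilde{x}}{dt}(o_y-\tilde{y}) - \tfrac{d\theta}{dt}\big((o_x-\tilde{x})^2+(o_y-\tilde{y})^2\big)$. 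Substituting the parameterisation and expanding, the trigonometric terms in $t$ should collapse (using $\sin^2+\cos^2=1$ and the angle-sum identities) into an expression of the form $A^2+B^2 - \sqrt{C^2+D^2}\,\cos(t+\theta_0-\varphi)$, with $A,B,C,D$ and $\varphi$ as defined in the statement; I would verify that the bookkeeping produces precisely those coefficients with the indicated signs ($+R\sin\theta_0$, $-BR$, etc.).

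Having reduced $d\tilde{\Phi}/dt$ to $A^2+B^2 - \sqrt{C^2+D^2}\cos(t+\theta_0-\varphi)$, the monotonicity conclusion is immediate: if $\frac{A^2+B^2}{\sqrt{C^2+D^2}}$ exceeds $\max_{t\in(0,t_{\max})}\cos(t+\theta_0-\varphi)$ then $d\tilde\Phi/dt>0$ throughout, so $\Phi$ is strictly increasing; if it is below the minimum then $\Phi$ is strictly decreasing; either way $\Phi$ attains its extrema at the endpoints, which by Definition~\ref{def:sparsity} and the remark in Section~\ref{sec:mono_SEF} gives the sparsity of the SEF check. I would also note the harmless degenerate subcase $C=D=0$, where $d\tilde\Phi/dt=A^2+B^2\ge 0$ is already sign-definite.

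The main obstacle I anticipate is purely the algebraic bookkeeping in the collapse step: getting every sign right when the centre of rotation flips to the left side and $\theta$ increases rather than decreases. The structure is identical to Theorem~\ref{thm:SEF_FR}, so the risk is not conceptual but the possibility of a dropped sign in the expansion of $-\sin(\theta_0+t)(\cdot)+\cos(\theta_0+t)(\cdot)$ and in matching the cross-terms to $C=A\Delta x - BR + B\Delta y$ and $D=AR-A\Delta y+B\Delta x$. Since the theorem statement already commits to those exact coefficients, the proof is essentially a verification that the F-L substitution reproduces them, and I would relegate the full expansion to an appendix as was done for the F-R case.
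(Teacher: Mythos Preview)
Your plan is essentially identical to the paper's own proof: parameterise the F-L arc with $\theta(t)=\theta_0+t$, clear the positive denominator to work with $d\tilde\Phi/dt$, factor via $A,B$ tied to the left-side pivoting centre, and collapse to a single cosine in $t+\theta_0$. One small caveat on the bookkeeping you yourself flagged: because $d\theta/dt=+1$ here, the paper's expansion actually gives $d\tilde\Phi/dt=\sqrt{C^2+D^2}\cos(t+\theta_0-\varphi)-A^2-B^2$, i.e.\ the overall sign opposite to the form you anticipated; this is harmless for the theorem (the two stated inequalities simply swap which one forces $\Phi$ increasing versus decreasing), but it is precisely the dropped-sign hazard you warned about.
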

\begin{proof}
The proof is very similar to that of \textbf{Theorem~\ref{thm:SEF_FR}}, except many changes in the signs, which is put in \textbf{Appendix~\ref{sec:appendix_SEF_FL}}. 
\end{proof}

Given the fact that a B-L path is the inverse of a F-R path, and a B-R path is the inverse of a F-L path, the following remarks are concluded. 

\begin{remark}\label{rem:SEF_BL}
(Relative Angle Monotonicity, B-L) Let the robot path be a Backward and Left-turning arc. 
The centre angle of the arc is $t_{\rm max}$. 
Then the relative angle function is monotonic if the condition in \textbf{Theorem~\ref{thm:SEF_FR}} is satisfied, with the interval of $t$ being changed from $(0, t_{\rm max})$ to $(-t_{\rm max}, 0)$. 
\end{remark}

\begin{remark}\label{rem:SEF_BR}
(Relative Angle Monotonicity, B-R)
Let the robot path be a Backward and Right-turning arc. 
The centre angle of the arc is $t_{\rm max}$. 
Then the relative angle function is monotonic if the condition in \textbf{Theorem~\ref{thm:SEF_FL}} is satisfied, with the interval of $t$ being changed from $(0, t_{\rm max})$ to $(-t_{\rm max}, 0)$. 
\end{remark}

\subsection{The Monotonicity of Tether Length Variation}\label{sec:mono_TLA}

In this subsection, sufficient conditions that guarantee the monotonicity of the tether length function are presented. 

\begin{theorem}
(Tether Length Monotonicity, Straight) Let the robot path be a Straight path. 
The length of the path is $t_{\rm max}$. 
Then the tether length function is monotonic if one of the following equations is satisfied, 
\begin{align}
&o_x + o_y - x_0\cos\theta_0 - y_0\sin\theta_0 - \Delta x < 0\\
&o_x + o_y - x_0\cos\theta_0 - y_0\sin\theta_0 - \Delta x > t_{\rm max}
\end{align}
\end{theorem}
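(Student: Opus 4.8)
The plan is to mirror the structure of the proof of Theorem~\ref{thm:SEF_Straight}: parameterise the straight robot path by the arc-length parameter $t\in(0,t_{\rm max})$, write down the tether length function, differentiate it with respect to $t$, and extract a sign condition. Since the tether-obstacle contact points are assumed unchanged (as justified by the preceding Lemma), the tether length is the constant length of the segment from $b$ to $o$ plus the distance $\ell(t)=\sqrt{(o_x-\tilde{x}(t))^2+(o_y-\tilde{y}(t))^2}$ from $o$ to the moving anchor point $s$. Only $\ell(t)$ depends on $t$, so monotonicity of the tether length reduces to monotonicity of $\ell(t)$, and — as in the earlier proof — it suffices to analyse the sign of a rescaled derivative that discards the positive factor $\ell(t)$ in the denominator.

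First I would substitute the expressions for $\tilde{x}(t)$ and $\tilde{y}(t)$ already derived in Theorem~\ref{thm:SEF_Straight}, namely $\tilde{x}(t)=x_0+t\cos\theta_0+\cos\theta_0\Delta x-\sin\theta_0\Delta y$ and $\tilde{y}(t)=y_0+t\sin\theta_0+\sin\theta_0\Delta x+\cos\theta_0\Delta y$, together with $d\tilde{x}/dt=\cos\theta_0$, $d\tilde{y}/dt=\sin\theta_0$. Computing $\frac{1}{2}\frac{d(\ell^2)}{dt}=-(o_x-\tilde{x})\frac{d\tilde{x}}{dt}-(o_y-\tilde{y})\frac{d\tilde{y}}{dt}$ and expanding, the terms in $t$ combine as $-t(\cos^2\theta_0+\sin^2\theta_0)=-t$, so the derivative is an affine (degree-one) function of $t$ with slope $-1$. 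Hence $\ell^2$ is strictly concave in $t$, and its derivative vanishes at exactly one value $t^\ast = o_x+o_y-x_0\cos\theta_0-y_0\sin\theta_0-\Delta x$ — wait, I should be careful collecting the constant cross terms, but the stated expression $o_x+o_y-x_0\cos\theta_0-y_0\sin\theta_0-\Delta x$ is precisely the critical point $t^\ast$. The derivative $\frac{d\ell^2}{dt}$ therefore has constant sign on $(0,t_{\rm max})$ if and only if $t^\ast\le 0$ or $t^\ast\ge t_{\rm max}$, which are exactly the two displayed inequalities; in either case $\ell^2$, and hence $\ell$ and the tether length, is monotonic on the open interval, so its extrema are attained at the endpoints, giving Eqn.~(\ref{eqn:tla_condition}).

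The step I expect to be the main obstacle is the bookkeeping of the constant cross terms when expanding $-(o_x-\tilde{x})\cos\theta_0-(o_y-\tilde{y})\sin\theta_0$: one must verify that the $\theta_0$-dependent constants (the $\cos\theta_0\sin\theta_0\Delta y$ pieces and similar) cancel so that the critical point collapses to the clean form $o_x+o_y-x_0\cos\theta_0-y_0\sin\theta_0-\Delta x$. In particular the appearance of the bare sum $o_x+o_y$ (rather than $o_x\cos\theta_0+o_y\sin\theta_0$) suggests the statement implicitly fixes or normalises $\theta_0$ — e.g.\ it may be presented in a frame where $\cos\theta_0=\sin\theta_0=1$ is not literally meant but rather the $o_x,o_y$ have been pre-rotated, or there is a typo — so I would double-check this normalisation against Fig.~\ref{fig:primitive_motions:a} before finalising. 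Modulo that algebraic verification, the argument is routine: affine derivative $\Rightarrow$ single critical point $\Rightarrow$ the two exclusion inequalities are exactly the conditions for no critical point inside $(0,t_{\rm max})$ $\Rightarrow$ monotonicity $\Rightarrow$ sparsity of TLA checking.
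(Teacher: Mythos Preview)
Your approach is exactly the paper's: parameterise by arc length, compute the rescaled derivative $\tfrac{1}{2}\,d\ell^2/dt$ (the paper's $d\tilde{L}/dt$), observe it is affine in $t$, and read off the two exclusion conditions on the unique critical point. Two small remarks: the $t$-coefficient comes out $+1$, not $-1$ (so $\ell^2$ is convex, as the squared distance from a fixed point to a point moving on a line must be), and your suspicion about the bare $o_x+o_y$ is well-founded --- the correct constant term is $o_x\cos\theta_0+o_y\sin\theta_0-x_0\cos\theta_0-y_0\sin\theta_0-\Delta x$, and the paper's own appendix carries the same slip into both the theorem statement and its displayed proof.
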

\begin{proof}
See \textbf{Appendix~\ref{sec:appendix_TLA_Straight}}. 
\end{proof}

\begin{algorithm}[t]
    \caption{Node Expansion}\label{alg:succ}
    \begin{algorithmic}[1]
        \Require current node $n_{\rm cur}$, Map $M$, Maximum tether length $L$, representative obstacle $\zeta$
        \Ensure Child node list $N_{\rm succ}$ 
\State $P$ = allPrimitives() \label{alg:node_expansion:prior_start}
\State $N_{\rm succ} = \varnothing$
\For{each element $(steer, dir, dis)$ in $P$}
\State $P_{\rm waypoint}$ = generatePath($n_{\rm cur}, (steer, dir, dis)$)
\State \% $P_{\rm waypoint}$ is the $x$-$y$-$\theta$ value of the waypoints
\State $n$ = size($P_{\rm waypoint}$) \% $n$ is large
\State is\_path\_valid = true
\For{$i$=$1, \cdots, n$}
\If{$\sim$isCollisionFree($P_{\rm waypoint}$, $i$) $||$
\Statex \qquad \qquad $\sim$isNS($P_{\rm waypoint}$, $i$, $n_{\rm cur}.O$)}
\State is\_path\_valid = false
\State \textbf{break}
\EndIf
\EndFor
\If{$\sim$is\_path\_valid}
\State \textbf{continue}
\EndIf\label{alg:node_expansion:prior_end}
\For{$i$=$1, \cdots, n$}\label{alg:node_expansion:valid_start}
\State $n_{\rm mid}$ = generateConf($n_{\rm cur}.O, P_{\rm waypoint}, i$)\label{alg:node_expansion:construction}
\If{$\sim$isSEF($n_{\rm mid}$) $||$ $\sim$isTLA($n_{\rm mid}$)}
\State is\_path\_valid = false
\State \textbf{break}
\EndIf
\EndFor
\If{$\sim$is\_path\_valid}
\State \textbf{continue}
\EndIf\label{alg:node_expansion:valid_end}
\State $n_{\rm succ}$ = generateConf($n_{\rm cur}.O, P_{\rm waypoint}, n$)\label{alg:node_expansion:post_start}
\State $n_{\rm succ}.gCost$ = movementCost($n_{\rm cur}$)
\State $n_{\rm succ}.h$ = calculateHsignature$(n_{\rm succ}, \zeta)$
\State $N_{\rm succ}$.push\_back($n_{\rm nucc}$)
\EndFor\label{alg:node_expansion:post_end}
\end{algorithmic}  
\end{algorithm}

\begin{algorithm}[t]
    \caption{Improved Node Expansion}\label{alg:improved_node_expansion}
    \begin{algorithmic}[1]
        \Require current node $n_{\rm cur}$, Map $M$, Maximum tether length $L$, representative obstacle $\zeta$
        \Ensure Child node list $N_{\rm succ}$ 
\State (line \ref{alg:node_expansion:prior_start} $\sim$ line \ref{alg:node_expansion:prior_end} in \textbf{Algorithm~\ref{alg:succ}})
\State $n_{\rm succ}$ = generateConf$(n_{\rm cur}.O, P_{\rm waypoint}, n)$
\If{$\sim$isSEF($n_{\rm succ}$) $||$ $\sim$isTLA($n_{\rm succ}$)}
\State \textbf{continue}
\EndIf
\State is\_o\_not\_changed = $\sim$isOChanged($n_{\rm cur}$, $M$, $P_{\rm waypoint}$)
\State is\_sef\_guaranteed = is\_o\_not\_changed \&\&
\Statex \qquad\qquad\qquad\qquad isSEFGuaranteed($steer$, $dis$)
\State is\_tla\_guaranteed = is\_o\_not\_changed \&\&
\Statex \qquad\qquad\qquad\qquad isTLAGuaranteed($steer$, $dis$)
\If{$\sim$is\_sef\_guaranteed $||$ $\sim$is\_tla\_guaranteed}
\State (line \ref{alg:node_expansion:valid_start} $\sim$ line \ref{alg:node_expansion:valid_end} in \textbf{Algorithm~\ref{alg:succ}})
\EndIf
\State (line \ref{alg:node_expansion:post_start} $\sim$ line \ref{alg:node_expansion:post_end} in \textbf{Algorithm~\ref{alg:succ}})
\end{algorithmic}  
\end{algorithm}

\begin{theorem}\label{thm:TLA_FR}
(Tether Length Monotonicity, F-R) 
Let the robot path be a Forward and Right-turning arc. 
The centre angle of the arc is $t_{\rm max}$. 
Then the tether length function is monotonic if one of the following equations is satisfied, 
\begin{align}
&\cos(t-\theta_0-\varphi) < 0,\ \forall t\in (0, t_{\rm max})\\
&\cos(t-\theta_0-\varphi) > 0,\ \forall t\in (0, t_{\rm max})
\end{align}
where $\varphi$ is the angle such that 
\begin{equation}
\cos\varphi = \frac{C}{\sqrt{C^2+D^2}},\ \sin\varphi = \frac{D}{\sqrt{C^2+D^2}}
\end{equation}
where
\begin{align}
C =& B\Delta x - AR - A\Delta y\\
D =& A\Delta x - BR - B\Delta y
\end{align}
and 
\begin{align}
A =& o_x - x_0 - R\sin\theta_0\\
B =& o_y - y_0 + R\cos\theta_0
\end{align}
\end{theorem}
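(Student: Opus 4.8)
The plan is to reduce the tether-length monotonicity to the monotonicity of a single squared-distance function and then show that its derivative is, up to a fixed nonzero factor, one cosine in the arc parameter. Throughout this section the tether--obstacle contact set is held fixed (its constancy being what the lemma on Unchanged Tether-Obstacle Contact Points provides), so the sub-tether from the base point $b$ to the last contact point $o$ has constant length; consequently the total tether length equals that constant plus the Euclidean distance $\|\vec{os}(t)\|$ between the fixed point $o=(o_x,o_y)$ and the moving anchoring point $s=(\tilde x(t),\tilde y(t))$. Hence it suffices to prove that $g(t):=(o_x-\tilde x(t))^2+(o_y-\tilde y(t))^2$ is monotonic on $(0,t_{\max})$.

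First I would set up the Forward-and-Right-turning arc exactly as in the proof of \textbf{Theorem~\ref{thm:SEF_FR}}: let $R$ be the turning radius and $p$ the instantaneous centre of rotation, lying a distance $R$ to the robot's right, so the pose is $(x(t),y(t),\theta(t))$ with $\theta(t)=\theta_0-t$ for $t\in(0,t_{\max})$, and $(\tilde x(t),\tilde y(t))$ is $(x(t),y(t))$ plus the egocentric offset $(\Delta x,\Delta y)$ rotated by $\theta(t)$. Differentiating, $g'(t)=-2\big[(o_x-\tilde x)\tilde x'(t)+(o_y-\tilde y)\tilde y'(t)\big]$, so only the sign of $h(t):=(o_x-\tilde x)\tilde x'+(o_y-\tilde y)\tilde y'$ is relevant, and the denominator-free style of \textbf{Theorem~\ref{thm:SEF_Straight}} applies here with no denominator at all.

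The step I expect to carry the weight is the cancellation of the quadratic terms in $h(t)$. Writing $\vec{os}=\vec{ps}-\vec{po}$ and noting that $s$ moves on a circle centred at the fixed point $p$ (so its velocity $(\tilde x',\tilde y')$ is orthogonal to the radius $\vec{ps}$), we get $h(t)=\vec{po}\cdot(\tilde x',\tilde y')=A\tilde x'+B\tilde y'$, where $\vec{po}=(A,B)=(o_x-x_0-R\sin\theta_0,\ o_y-y_0+R\cos\theta_0)$ is precisely the constant pair in the statement; equivalently one checks directly that $o_x-\tilde x=A+\tilde y'$ and $o_y-\tilde y=B-\tilde x'$, which makes the $\tilde x'\tilde y'$ cross terms cancel. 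Since $\tilde x'$ and $\tilde y'$ are sinusoids in $\theta_0-t$ with constant coefficients, a routine trigonometric reduction puts $A\tilde x'+B\tilde y'$ into the form of a fixed nonzero multiple of $\cos(t-\theta_0-\varphi)$, with the linear combinations $C$ and $D$ of $A,B,R,\Delta x,\Delta y$ and the phase $\varphi$ ($\cos\varphi=C/\sqrt{C^2+D^2}$, $\sin\varphi=D/\sqrt{C^2+D^2}$) exactly as given.

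Finally, $g'(t)$ then keeps a constant sign on $(0,t_{\max})$ if and only if $\cos(t-\theta_0-\varphi)$ does, i.e. precisely under the two stated alternatives $\cos(t-\theta_0-\varphi)<0$ or $\cos(t-\theta_0-\varphi)>0$ for all $t\in(0,t_{\max})$; in either case $g$, hence the tether length, is monotonic, so its maximum over the primitive path is attained at an endpoint, which gives the TLA half of the sparsity, Eqn.~(\ref{eqn:tla_condition}). The only genuine subtlety is the sign bookkeeping in the F-R parameterisation---the sense in which $\theta(t)$ turns and the side on which $p$ sits---needed to make the derived $A,B,C,D,\varphi$ agree with the statement; this parallels the computation already carried out for \textbf{Theorem~\ref{thm:SEF_FR}} in \textbf{Appendix~\ref{sec:appendix_SEF_FR}}, the difference (and simplification) being that the tether-length derivative carries no $d\theta/dt$ term and so reduces to the bare sign of the cosine rather than to a comparison against $(A^2+B^2)/\sqrt{C^2+D^2}$.
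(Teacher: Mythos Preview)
Your proposal is correct and follows essentially the same route as the paper's proof in \textbf{Appendix~\ref{sec:appendix_TLA_FR}}: differentiate the (squared) distance from $o$ to $s$, drop the positive denominator, and reduce the resulting expression to a single cosine in $t-\theta_0$ whose sign determines monotonicity. Your geometric observation that $s$ orbits the fixed pivot $p$ so that $(\tilde x',\tilde y')\perp\vec{ps}$, and hence $h(t)=\vec{po}\cdot(\tilde x',\tilde y')=A\tilde x'+B\tilde y'$, is a clean explanation of the quadratic cancellation that the paper obtains only by direct algebraic expansion; the two identities $o_x-\tilde x=A+\tilde y'$ and $o_y-\tilde y=B-\tilde x'$ you record are exactly the relations the paper uses implicitly.
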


\begin{proof}
See \textbf{Appendix~\ref{sec:appendix_TLA_FR}}. 
\end{proof}

\begin{theorem}\label{thm:TLA_FL}
(Tether Length Monotonicity, F-L) Let the robot path be a Forward and Left-turning arc. 
The centre angle of the arc is $t_{\rm max}$. 
Then the tether length function is monotonic if one of the following equations is satisfied, 
\begin{align}
&\cos(t+\theta_0 + \varphi) < 0,\ \forall t\in (0, t_{\rm max})\\
&\cos(t+\theta_0 + \varphi) > 0,\ \forall t\in (0, t_{\rm max})
\end{align}
where $\varphi$ is the angle such that 
\begin{equation}
\cos\varphi = \frac{C}{\sqrt{C^2+D^2}},\ \sin\varphi = \frac{D}{\sqrt{C^2+D^2}}
\end{equation}
where
\begin{align}
C =& A\Delta y - AR - B\Delta x\\
D =& A\Delta x - BR + B\Delta y
\end{align}
and 
\begin{align}
A =& o_x - x_0 + R\sin\theta_0\\
B =& o_y - y_0 + R\cos\theta_0
\end{align}
\end{theorem}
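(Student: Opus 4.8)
The plan is to follow the template of Theorem~\ref{thm:TLA_FR}, carrying through the sign changes that distinguish a forward left turn (centre of curvature to the left of the heading, $\theta$ increasing) from a forward right turn. Throughout I work under the standing restriction of this section, verified via the lemma on unchanged tether--obstacle contact points, that the contact set is fixed during the primitive motion; then the only part of the tether that changes length is the straight segment from the last contact point $o$ to the anchoring point $s$. Consequently the tether length equals a fixed quantity plus the distance from $o$ to $s$, and --- since $\sqrt{\cdot}$ is strictly increasing --- the tether length function is monotonic on $(0,t_{\rm max})$ if and only if
\begin{equation}
g(t) \triangleq \big(\tilde{x}(t) - o_x\big)^2 + \big(\tilde{y}(t) - o_y\big)^2
\end{equation}
is monotonic there, where $(\tilde{x}(t),\tilde{y}(t))$ is the world-frame trace of $s$. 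So it suffices to pin down the sign of $g'(t) = 2(\tilde{x}-o_x)\tilde{x}' + 2(\tilde{y}-o_y)\tilde{y}'$ on the open arc.

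First I would parameterise the forward left-turning arc of radius $R$ by its centre angle $t$, writing $\theta(t) = \theta_0 + t$ and placing the centre of curvature at $(x_0 - R\sin\theta_0,\ y_0 + R\cos\theta_0)$; composing the resulting mobile-unit pose with the rigid-body transform of the egocentric offset $(\Delta x,\Delta y)$ of $s$ expresses both $\tilde{x}(t)-o_x$ and $\tilde{y}(t)-o_y$ as affine combinations of $1$, $\cos(\theta_0+t)$ and $\sin(\theta_0+t)$, whose coefficients are built only from $o_x,o_y,x_0,y_0,R,\Delta x,\Delta y$. The structural observation that makes everything collapse is that the second-degree trigonometric part of $g$ --- the sum of the squares of the homogeneous (constant-free) degree-one parts of $\tilde{x}-o_x$ and $\tilde{y}-o_y$ --- is a pure constant: it reduces via $\cos^2+\sin^2=1$ to $(\Delta x)^2 + (R-\Delta y)^2$. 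Hence $g(t)$ is a constant plus a single sinusoid in the angle $\theta_0+t$, and therefore $g'(t)$ is itself a single sinusoid in $\theta_0+t$.

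It then remains to expand $g'(t)$ with the angle-addition identities and read off its amplitude and phase. As in the F-R case the terms quadratic in $R$ and the terms involving only $\Delta x,\Delta y$ drop out, and the survivors assemble --- up to a strictly positive constant factor --- into
\begin{equation}
g'(t) = \kappa\,\sqrt{C^2+D^2}\,\cos(t+\theta_0+\varphi), \qquad \kappa>0,
\end{equation}
with $A,B,C,D$ and the phase $\varphi$ exactly as in the statement (in particular $\cos\varphi = C/\sqrt{C^2+D^2}$ and $\sin\varphi = D/\sqrt{C^2+D^2}$). Since $\kappa>0$, the sign of $g'$ is constant on $(0,t_{\rm max})$ precisely when $\cos(t+\theta_0+\varphi)$ keeps a fixed sign there, i.e.\ precisely under one of the two stated inequalities; in that case $g$, hence the tether length function, is monotonic. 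In the degenerate case $C=D=0$ the sinusoid vanishes identically, $g$ is constant and trivially monotonic, while neither strict inequality can hold --- consistent with the theorem asserting only sufficiency.

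The main obstacle I expect is purely bookkeeping. Relative to Theorem~\ref{thm:TLA_FR}, the centre of curvature switches sides and $\theta(t)$ increases rather than decreases, so essentially every occurrence of $R\sin\theta_0$, $R\cos\theta_0$, and of $t$ either flips sign or shifts phase; the delicate point is to verify that, after the full expansion and the cancellation of the second-degree terms, the residual first-degree coefficients reconstitute exactly the $C = A\Delta y - AR - B\Delta x$ and $D = A\Delta x - BR + B\Delta y$ (with $A,B$ as listed) of the statement rather than some sign-permuted variant, and that the leftover scalar multiplier $\kappa$ is genuinely positive (using $R>0$, together with $s\neq o$ so that the distance from $o$ to $s$ is nonzero). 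Everything else is a routine chain-rule-plus-trigonometric-identity computation of the same type already carried out for the straight and F-R cases.
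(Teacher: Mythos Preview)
Your proposal is correct and follows essentially the same route as the paper's Appendix~\ref{sec:appendix_TLA_FL}: parameterise the F-L arc by centre angle, write the world-frame trace of $s$, and reduce the sign of the length derivative to that of a single sinusoid $\sqrt{C^2+D^2}\cos(t+\theta_0+\varphi)$ after the quadratic trigonometric terms cancel. The only cosmetic difference is that you differentiate the squared distance $g(t)$ whereas the paper differentiates $L(t)$ and then clears the positive denominator; since $g'(t)=2\,d\tilde L/dt$, your $\kappa$ is simply $2$ and the two computations are line-for-line equivalent.
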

\begin{proof}
See \textbf{Appendix~\ref{sec:appendix_TLA_FL}}
\end{proof}

\begin{remark}\label{rem:TLA_BL}
(Tether Length Monotonicity, B-L) 
Let the robot path be a Backward and Left-turning arc. 
The centre angle of the arc is $t_{\rm max}$. 
Then the tether length function is monotonic if the condition in \textbf{Theorem~\ref{thm:TLA_FR}} is satisfied, with the interval of $t$ being changed from $(0, t_{\rm max})$ to $(-t_{\rm max}, 0)$. 
\end{remark}

\begin{remark}\label{rem:TLA_BR}
(Tether Length Monotonicity, B-R) 
Let the robot path be a Backward and Right-turning arc. 
The centre angle of the arc is $t_{\rm max}$. 
Then the tether length function is monotonic if the condition in \textbf{Theorem~\ref{thm:TLA_FL}} is satisfied, with the interval of $t$ being changed from $(0, t_{\rm max})$ to $(-t_{\rm max}, 0)$. 
\end{remark}

\begin{figure*}[t]
\centering
\subfigure[Case $1$]{
\includegraphics[width=0.18\textwidth]{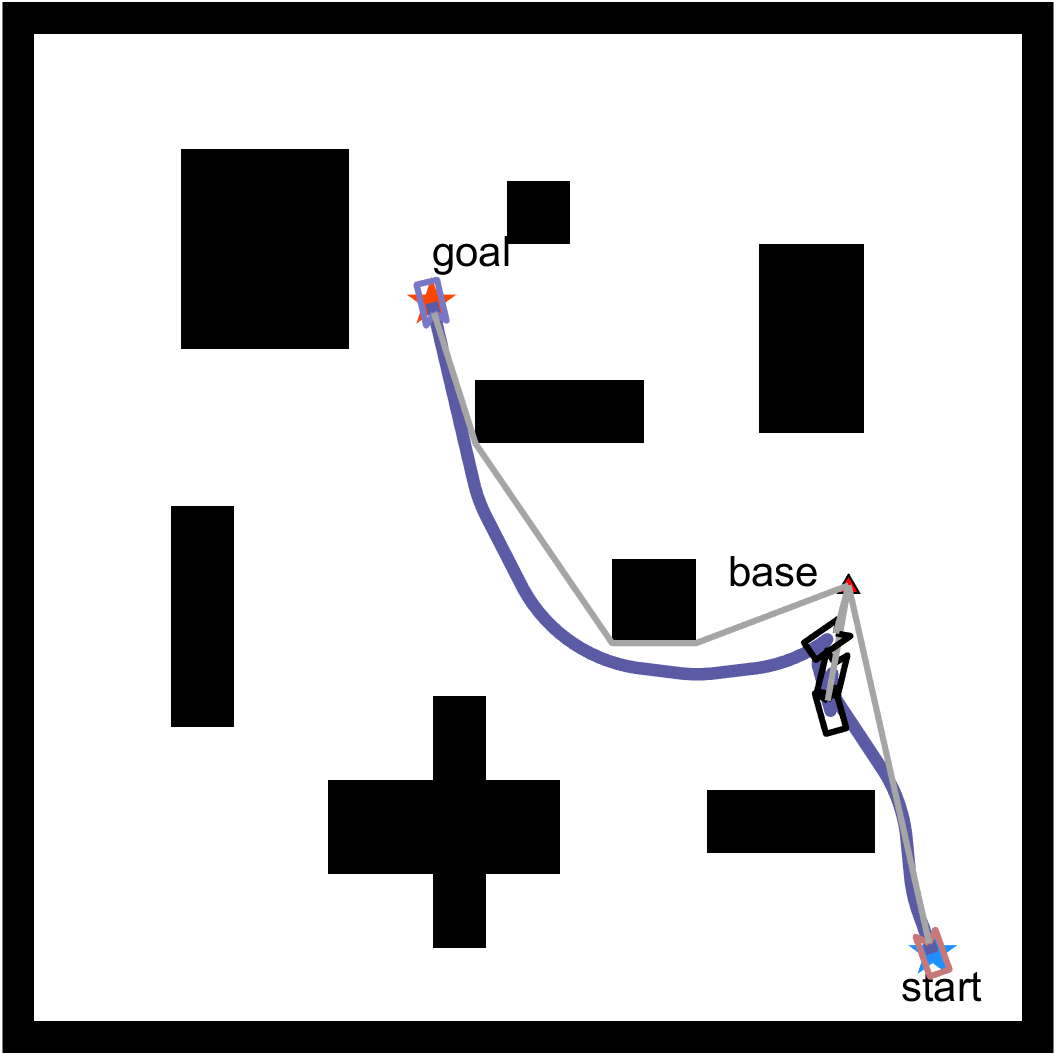}\label{fig:case1}
}
\subfigure[Case $2$]{
\includegraphics[width=0.18\textwidth]{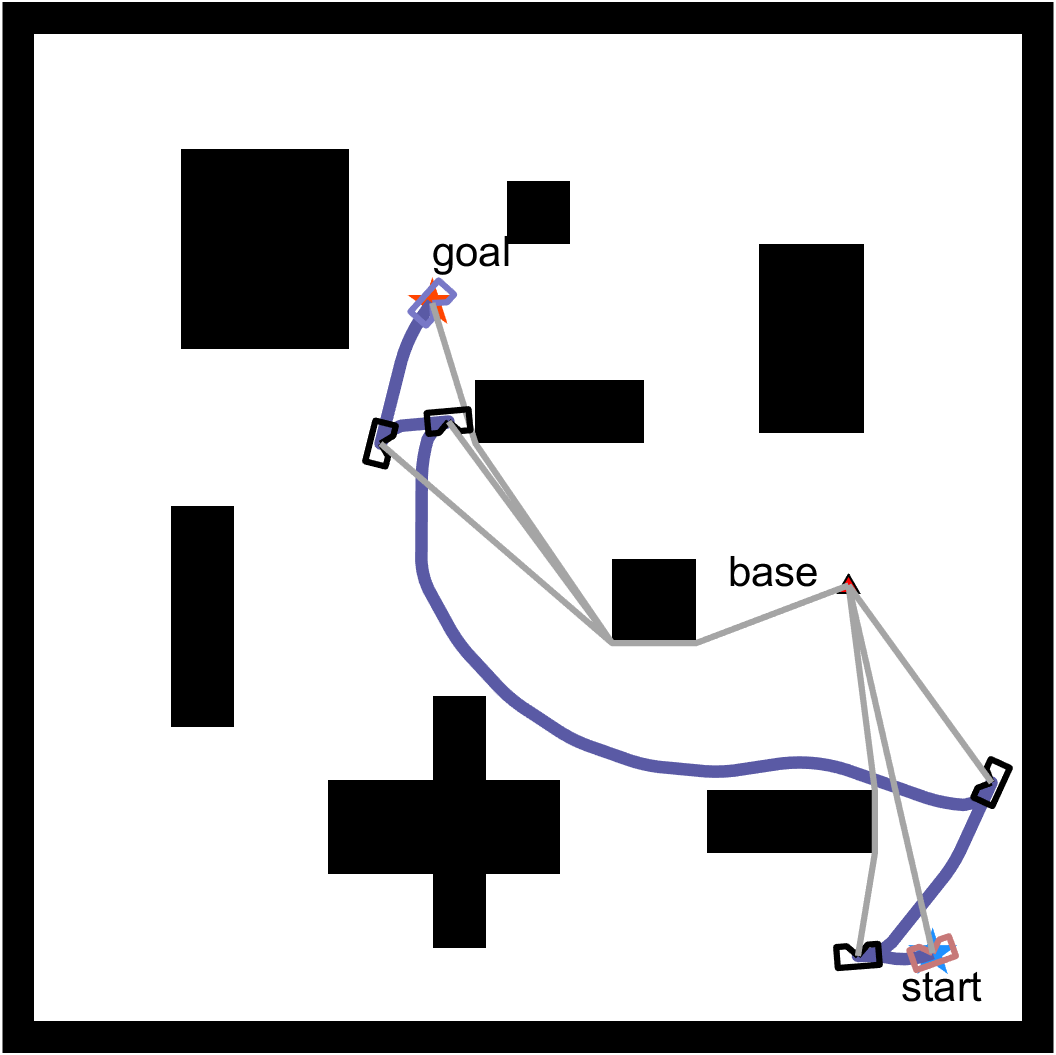}\label{fig:case2}
}
\subfigure[Case $3$]{
\includegraphics[width=0.18\textwidth]{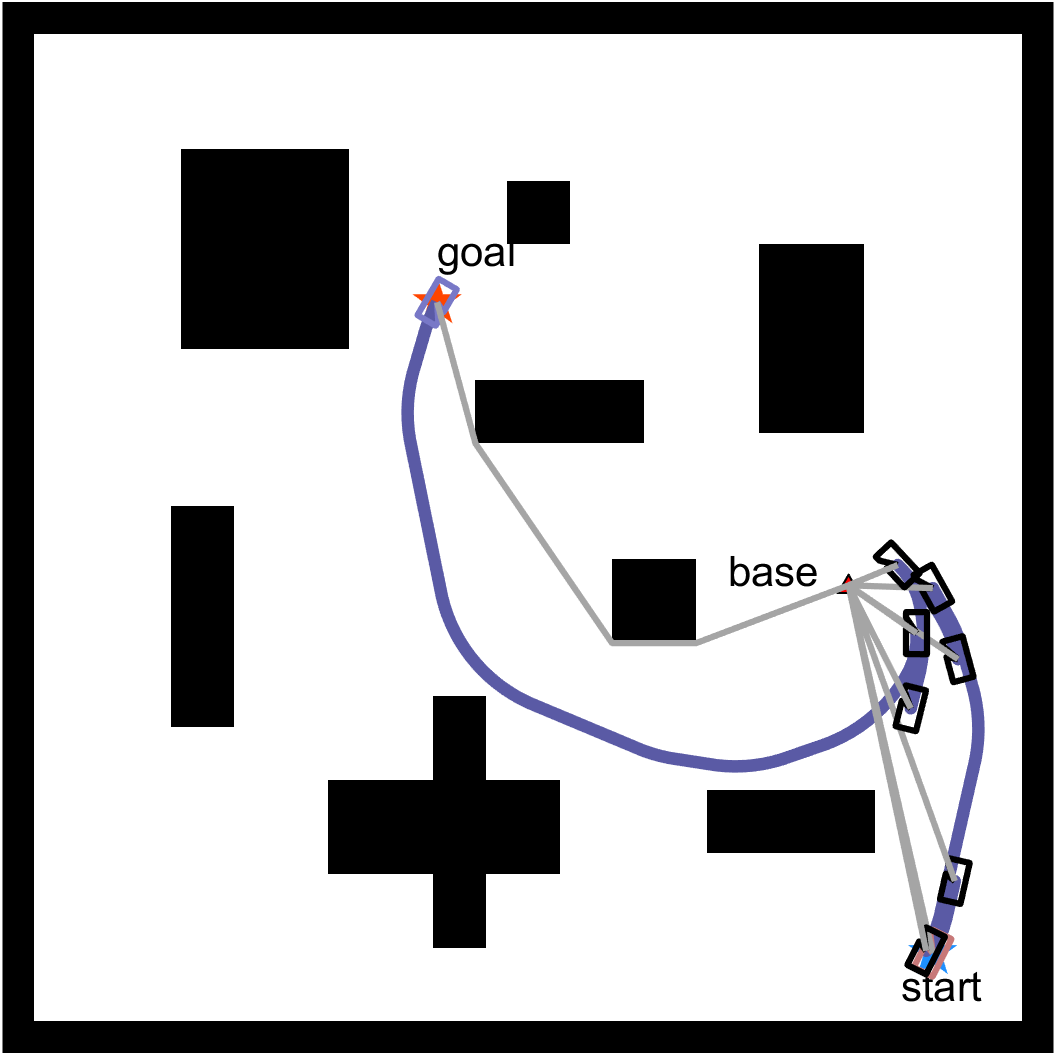}\label{fig:case3}
}
\subfigure[Case $4$]{
\includegraphics[width=0.18\textwidth]{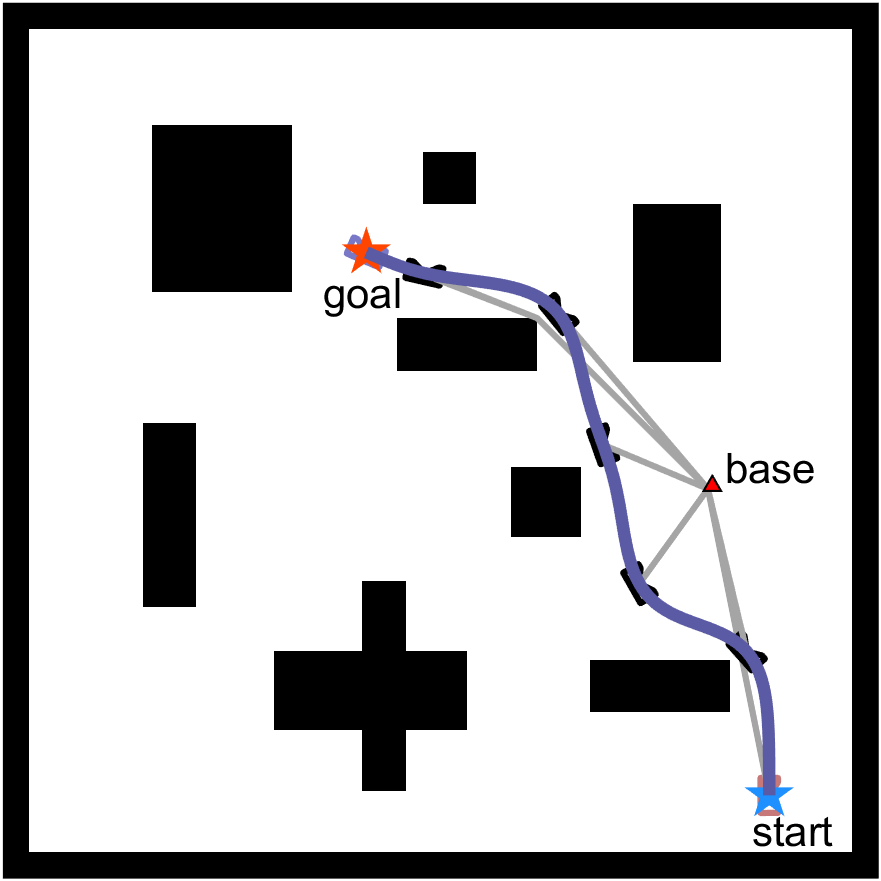}\label{fig:case1_4}
}
\subfigure[Case $5$]{
\includegraphics[width=0.18\textwidth]{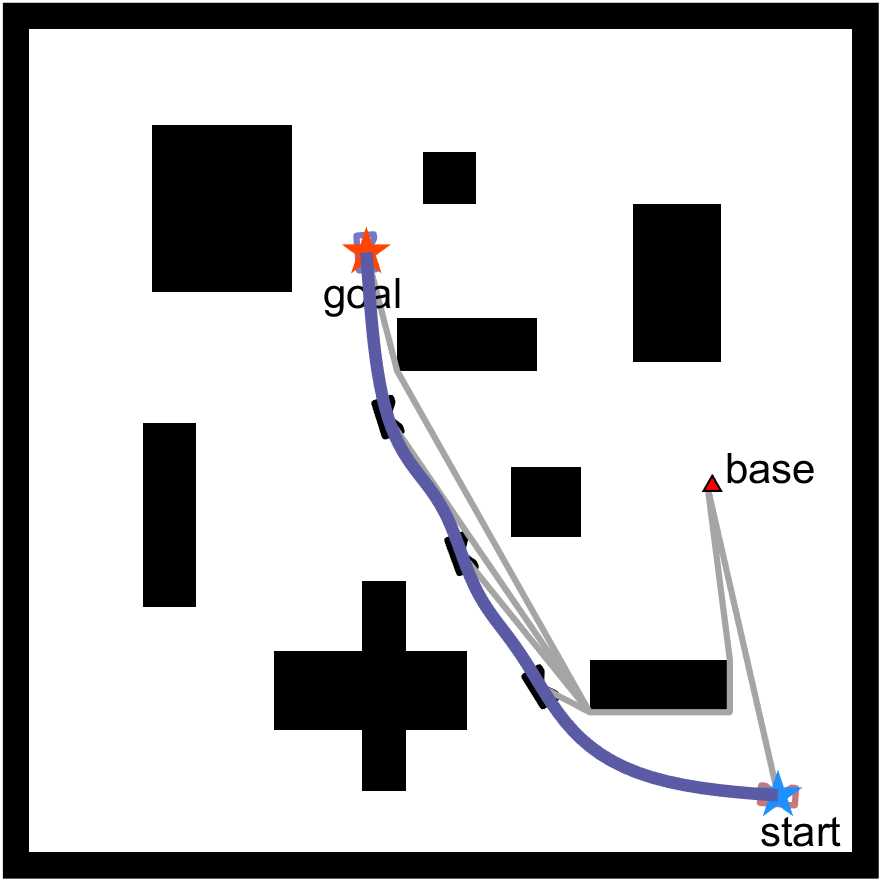}\label{fig:case1_5}
}
\caption{Illustration of tethered motions from the same start configuration to the same goal location. 
(a)(b)(c) Motions satisfying specified SEF constraints. 
(d)(e) Commonplace (untethered) differential-driven robot motions, where (d) violates the SEF constraint and (e) violates both the SEF constraint and the TLA constraint. 
}\label{fig:exp_case_studies}
\end{figure*}

\begin{figure}[t]
\centering
\subfigure[Variation of $\Phi$ in Case 1 (blue), 2 (red), and 3 (green). ]{
\includegraphics[width=0.48\textwidth]{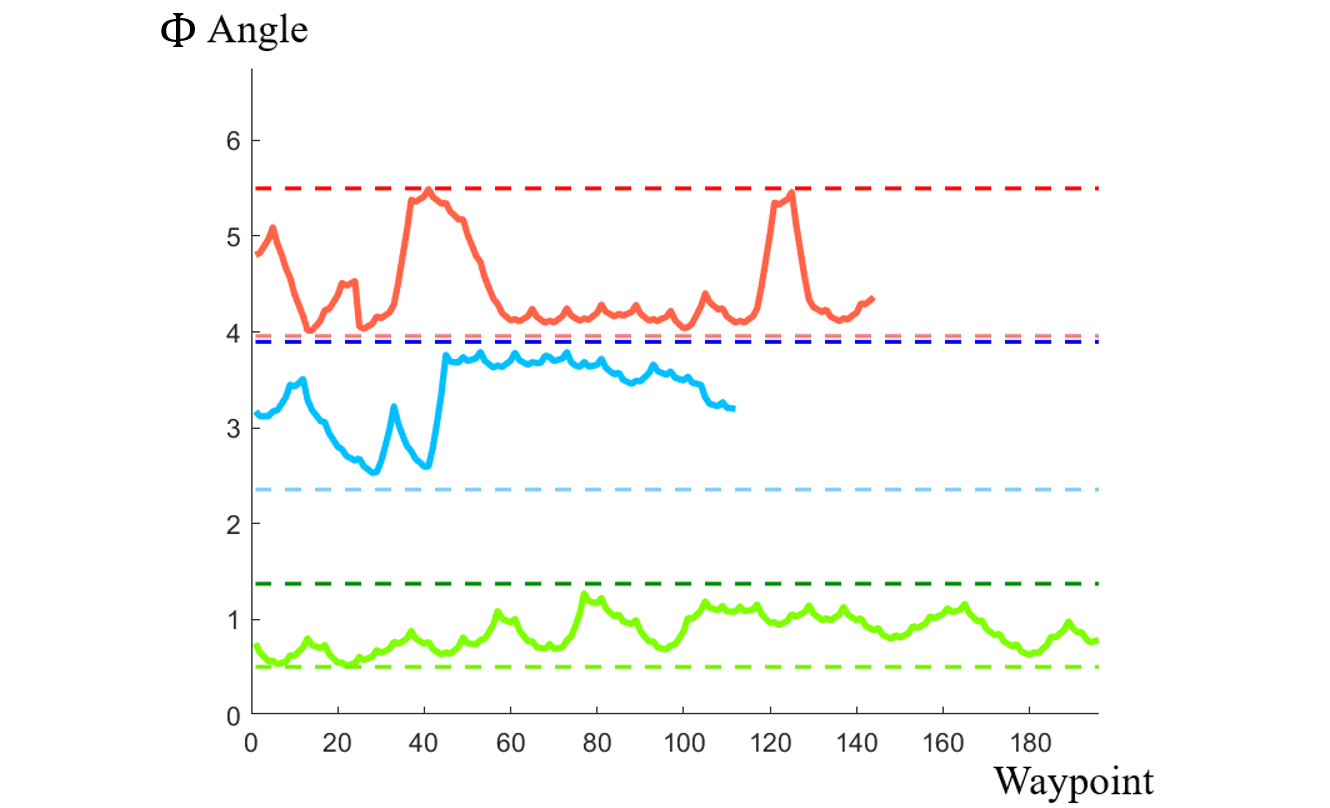}\label{fig:phi_case1_123}
}
\subfigure[Variation of $\Phi$ in Case 2 (red), 4 (dark grey), and 5 (light grey). ]{
\includegraphics[width=0.48\textwidth]{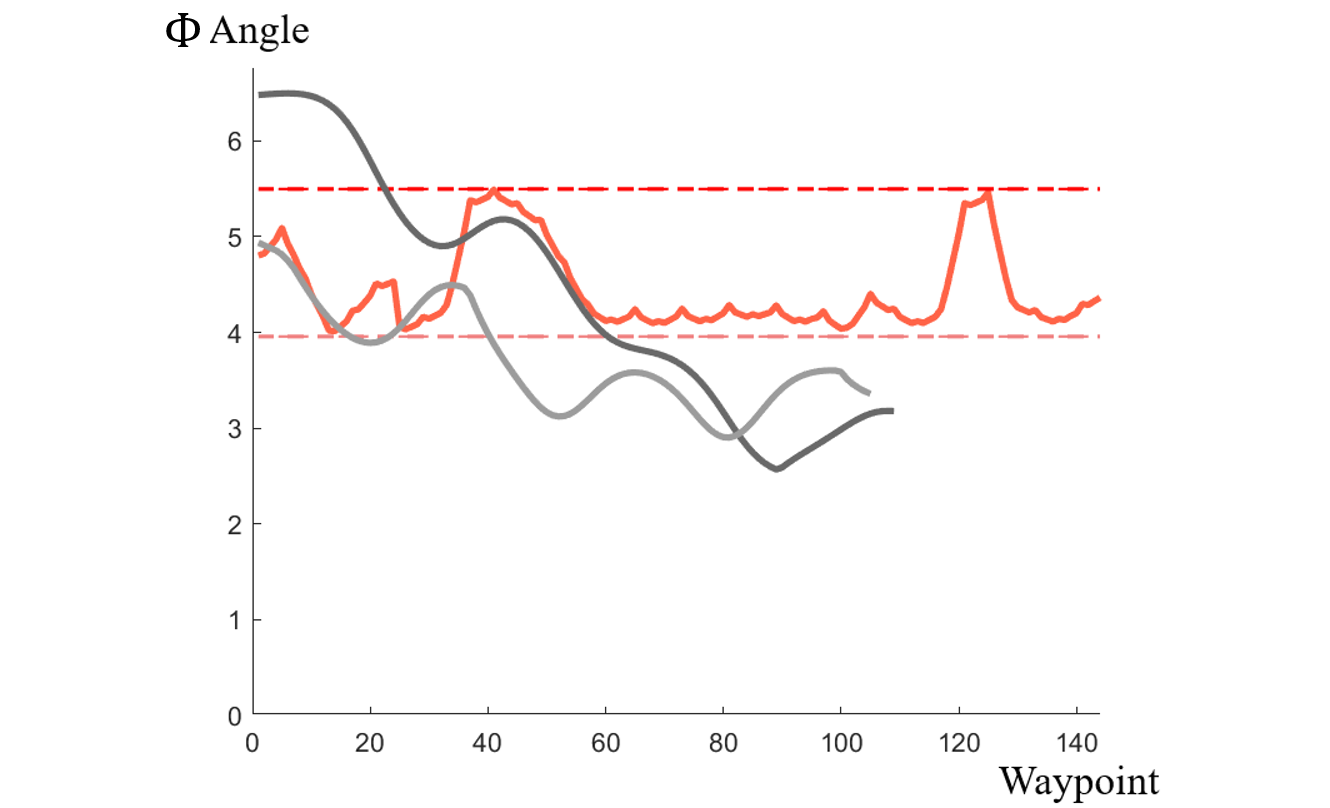}\label{fig:phi_case1_145}
}
\caption{Illustration of the angular difference between robot's heading orientation and the tether retracting direction. Admissible upper bounds and lower bounds are depicted. 
}\label{fig:exp_Phi}
\end{figure}

\subsection{Summary (Improved Node Expansion)}
As the result of the aforementioned discussions, the established sufficient conditions are incorporated into \textbf{Algorithm~\ref{alg:succ}}. 
The improved node expansion module is presented in \textbf{Algorithm~\ref{alg:improved_node_expansion}}. 
The most apparent difference between the two algorithms is the re-location of the most time-consuming command, the construction of all waypoint configurations (line~\ref{alg:node_expansion:construction} in \textbf{Algorithm~\ref{alg:succ}}), into an \textbf{if} structure. 
After the primitive path is generated, the collision-free property and the non-selfcrossing property of the path are validated. 
Whether the tether-obstacle contact points are changing is also assessed. 
Then, based on the type of the robot path (straight, F-R, F-L, B-R, or B-L), known parameters are $\Delta x$, $\Delta y$, $o_x$, $o_y$, $x_0$, $y_0$, $\theta_0$, $R$, and $t_{\rm max}$. 
By calculating the corresponding $A$, $B$, $C$, $D$, and $\varphi$, the corresponding conditions (to be precise, one among \textbf{Theorem~\ref{thm:SEF_FR}}, \textbf{Theorem~\ref{thm:SEF_FL}}, \textbf{Remark~\ref{rem:SEF_BL}}, and \textbf{Remark~\ref{rem:SEF_BR}} for SEF, and one among \textbf{Theorem~\ref{thm:TLA_FR}}, \textbf{Theorem~\ref{thm:TLA_FL}}, 
\textbf{Remark~\ref{rem:TLA_BL}}, \textbf{Remark~\ref{rem:TLA_BR}} for TLA) are verified. 
These are reflected in the Boolean variables ``is\_o\_not\_changed", ``is\_sef\_guaranteed", and ``is\_tla\_guaranteed" as either ``True" or ``False". 
If all these variables evaluate to ``True", then all waypoint configurations are guaranteed to be valid, avoiding the need for explicit construction and inspection. 
The computational cost is hereby reduced. 

\section{Experimental Results}
\label{section_exp}

The proposed algorithm is designed to construct self-entanglement-free paths for tethered differential-driven robots with polygonal mobile unit operating in arbitrary planar environment. 
To the best of the authors' knowledge, there did not exist a prior self-entanglement-free tethered path planner. 
Therefore, in Section~\ref{sec:case_study}, the resultant paths generated by the proposed algorithm are demonstrated, alongside those produced by a commonly used differential-driven planner which does not consider the SEF constraint. 
In Section~\ref{section_exp_comparison}, the efficiency of the improved node expansion module is closely evaluated by comparing the computational time of both node expansion strategies. 
This assessment is conducted under various settings, including different robot kinematics, different lengths of primitive paths, and different distance resolutions for waypoint configuration validity checking. 
Finally, four real-world demonstrations are provided in Section~\ref{section_realworld} to validate the practicality of the proposed algorithm in real-world settings. 
An open-sourcing implementation has been provided here:

\url{https://github.com/ZJUTongYang/seftpp}. 

\subsection{Case Studies}\label{sec:case_study}
Refer to Fig.~\ref{fig:exp_case_studies} for illustrations. 
These illustrations take place on a $100\times 100$ grid-based planar map containing 8 internal obstacles. 
The base point, the start location, and the goal location are set at $(80.50, 44.50)$, $(88.50, 9.50)$, and $(41.50, 71.50)$, respectively. 
These locations are unchanged throughout all testing scenarios. 
The initial robot tether state is initialised as the local shortening of the Dijkstra's shortest path from the base point to the start location. 
The initial robot heading direction is initialised such that the tether stretching direction aligns with the middle of the admissible interval. 
The maximum tether length constraint is set at $80$ (grids).
Base point is illustrated as a triangle and is treated as an obstacle. 
Tether states are depicted as grey lines, whilst robot paths are drawn as thick blue curves.
In Case 1, 2, and 3, various tethered robot kinematics are demonstrated, with the tether extending to the back, right, and left, respectively. 
The corresponding admissible intervals $[\Phi_1, \Phi_2]$ are set as follows: [2.36, 3.93], [3.93, 5.50], [0.51, 1.11], respectively. 
Utilising the proposed algorithm, the resultant paths are guaranteed to adhere to the SEF constraint, as depicted in Fig.~\ref{fig:exp_case_studies}(a)$\sim$(c). 
Notably, Case 3 is the most difficult task for the robot to execute because the admissible $\Phi$ interval $[0.51, 1.11]$ is the narrowest. 
The variation of $\Phi$ during the robot motion is visualised in Fig.~\ref{fig:exp_Phi}(a). 
In contrast, using a path planner without explicit consideration of the SEF constraint, the resultant paths, as visualised in Fig.~\ref{fig:exp_case_studies}(d)$\sim$(e), violate the SEF constraint. 
The corresponding $\Phi$ variations are shown in Fig.~\ref{fig:exp_Phi}(b). 
The reader is referred to the supplementary video for the animation of the robot motions. 

\begin{table*}[t]
\setlength{\tabcolsep}{5pt}
\centering
\caption{Computational Time of Algorithms Given Different Primitive Path Length and Validity Checking Resolution in Case 1}\label{tab:data_in_cases}
\begin{tabular}{c|c|c|c|c|c}
\hline
\multicolumn{2}{c|}{} & \multicolumn{4}{c}{Distance Resolution between Consecutive Waypoints (grid)}\\
\cline{3-6}
\multicolumn{2}{c|}{}& 1.0 & 0.7 & 0.4 & 0.1\\
\hline
\multicolumn{2}{c|}{} & \multicolumn{4}{c}{Case 1}\\
\hline
\multirow{6}{*}{\rotatebox[origin=c]{90}{Primitive Path Length (grid)}}&1 & \makecell{ 7.40s / \textbf{6.60s} \\ 36529 / 715330 (\textbf{95.14\%})} & \makecell{ 4.40s / \textbf{4.19s} \\ 20778 / 451815 (\textbf{95.60\%})} & \makecell{ 8.17s / \textbf{6.76s} \\ 30080 / 576261 (\textbf{95.04\%})} & \makecell{ 33.10s / \textbf{14.76s} \\ 28760 / 480289 (\textbf{94.35\%})} \\
\cline{2-6}
& 2 &\makecell{ 8.98s / \textbf{7.04s} \\ 55438 / 508563 (\textbf{90.17\%}) }&\makecell{ 5.51s / \textbf{4.66s} \\ 32217 / 340386 (\textbf{91.35\%}) }&\makecell{ 12.55s / \textbf{8.21s} \\ 38614 / 369212 (\textbf{90.53\%}) }&\makecell{ 66.94s / \textbf{24.12s} \\ 34864 / 327846  (\textbf{90.39\%}) }\\
\cline{2-6}
& 3 &\makecell{ 7.89s / \textbf{6.59s} \\ 52063 / 328729 (\textbf{86.33\%}) }&\makecell{ 8.23s / \textbf{6.37s} \\ 40220 / 270946 (\textbf{87.07\%}) }&\makecell{ 13.39s / \textbf{8.56s} \\ 37649 / 251797 (\textbf{86.99\%}) }&\makecell{ 99.36s / \textbf{33.45s}\\ 36424 / 236765 (\textbf{86.67\%}) }\\
\cline{2-6}
& 4 &\makecell{8.93s / \textbf{6.66s} \\ 47769 / 236709 (\textbf{83.21\%}) }&\makecell{ 8.87s / \textbf{6.52s} \\ 42076 / 208312 (\textbf{83.20\%}) }&\makecell{ 19.36s / \textbf{11.74s} \\ 43679 / 212518 (\textbf{82.95\%}) }&\makecell{ 139.65s / \textbf{47.70s} \\ 39562 / 189695 (\textbf{82.74\%}) }\\
\cline{2-6}
& 5 &\makecell{ 8.48s / \textbf{6.42s} \\ 43403 / 170840 (\textbf{79.74\%}) }&\makecell{ 11.22s / \textbf{7.90s} \\ 41722 / 163427 (\textbf{79.66\%}) }&\makecell{ 20.57s / \textbf{12.15s} \\ 40558 / 156883 (\textbf{79.46\%}) }&\makecell{ 180.59s / \textbf{65.81s}\\ 41084 / 154362 (\textbf{78.98\%}) }\\
\cline{2-6}
& 6 &\makecell{ 6.18s / \textbf{4.78s} \\ 26571 / 92546 (\textbf{77.69\%}) }&\makecell{ 7.64s / \textbf{5.53s} \\ 24308 / 84090 (\textbf{77.58\%}) }&\makecell{ 16.54s / \textbf{10.03s}\\ 25054 / 84908 (\textbf{77.22\%}) }&\makecell{ 138.15s / \textbf{52.19s} \\ 24049 / 80592 (\textbf{77.02\%}) }\\
\hline
\multicolumn{2}{c|}{} & \multicolumn{4}{c}{Case 2}\\
\hline
\multirow{6}{*}{\rotatebox[origin=c]{90}{Primitive Path Length (grid)}}& 1 &\makecell{ \textbf{2.62s} / 2.63s\\ 23789 / 251896 (\textbf{91.37\%}) }&\makecell{ 2.05s / \textbf{2.04s} \\  20203 / 191871 (\textbf{90.47\%}) }&\makecell{ 2.56s / \textbf{2.30s} \\ 21594 / 160804 (\textbf{88.16\%}) }&\makecell{ 10.50s / \textbf{5.57s} \\ 21293 / 128949 (\textbf{85.83\%}) }\\
\cline{2-6}
& 2 &\makecell{ 3.26s / \textbf{2.91s} \\ 43406 / 167300 (\textbf{79.40\%}) }&\makecell{ 1.60s / \textbf{1.50s} \\ 22051 / 77320 (\textbf{77.81\%}) }&\makecell{ 5.85s / \textbf{4.26s} \\ 41686 / 126885 (\textbf{75.27\%}) }&\makecell{ 19.93s / \textbf{9.70s} \\ 26907 / 74202 (\textbf{73.39\%}) }\\
\cline{2-6}
& 3 &\makecell{ 3.68s / \textbf{3.11s} \\ 52545 / 111075 (\textbf{67.89\%}) }&\makecell{ 3.56s / \textbf{3.03s} \\ 42840 / 83278 (\textbf{66.03\%}) }&\makecell{ 3.83s / \textbf{2.78s} \\ 27483 / 49670 (\textbf{64.38\%}) }&\makecell{ 23.02s / \textbf{12.18s} \\ 22432 / 38035 (\textbf{62.90\%}) }\\
\cline{2-6}
& 4 &\makecell{ 3.38s / \textbf{3.01s} \\ 46312 / 61572 (\textbf{57.07\%}) }&\makecell{ 1.82s / \textbf{1.69s}\\ 21813 / 26968 (\textbf{55.28\%}) }&\makecell{ 5.23s / \textbf{4.00s} \\ 30238 / 34761 (\textbf{53.48\%}) }&\makecell{ 23.67s / \textbf{14.03s} \\ 17949 / 19317 (\textbf{51.84\%}) }\\
\cline{2-6}
& 5 &\makecell{ 3.09s / \textbf{2.76s} \\ 37874 / 36126 (\textbf{48.82\%}) }&\makecell{ 3.07s / \textbf{2.61s} \\ 26745 / 25309 (\textbf{48.62\%}) }&\makecell{ 4.02s / \textbf{3.33s} \\ 18408 / 17048 (\textbf{48.08\%}) }&\makecell{ 21.72s / \textbf{12.88s} \\ 10968 / 11615 (\textbf{51.43\%}) }\\
\cline{2-6}
& 6 &\makecell{ 2.44s / \textbf{2.15s} \\ 24416 / 17965 (\textbf{42.39\%}) }&\makecell{ 1.58s / \textbf{1.36s} \\ 11490 / 8756 (\textbf{43.25\%}) }&\makecell{ 3.69s / \textbf{3.03s} \\ 13418 / 9905 (\textbf{42.47\%}) }&\makecell{ 18.17s / \textbf{12.49s} \\ 8162 / 5059 (\textbf{38.26\%}) }\\
\hline
\multicolumn{2}{c|}{} & \multicolumn{4}{c}{Case 3}\\
\hline
\multirow{6}{*}{\rotatebox[origin=c]{90}{Primitive Path Length (grid)}}& 1 &\makecell{ 1.86s / \textbf{1.80s} \\ 11368 / 169447 (\textbf{93.71\%}) }&\makecell{ 1.79s / \textbf{1.69s} \\ 9567 / 161868 (\textbf{94.42\%}) }&\makecell{ 2.47s / \textbf{2.22s} \\ 11054 / 155369 (\textbf{93.36\%}) }&\makecell{ 12.14s / \textbf{6.12s} \\ 14008 / 156054 (\textbf{91.76\%}) }\\
\cline{2-6}
& 2 &\makecell{ 1.99s / \textbf{1.74s} \\ 13448 / 90343 (\textbf{87.04\%}) }&\makecell{ 1.78s/ \textbf{1.64s} \\ 11187 / 78749 (\textbf{87.56\%}) }&\makecell{ 3.77s / \textbf{2.81s} \\ 13625 / 83647 (\textbf{85.99\%}) }&\makecell{ 19.29s / \textbf{8.63s} \\ 13112 / 76473 (\textbf{85.36\%}) }\\
\cline{2-6}
& 3 &\makecell{ 2.40s / \textbf{1.93s} \\ 12466 / 56802 (\textbf{82.00\%}) }&\makecell{ 2.76s / \textbf{1.78s} \\ 8541 / 43027 (\textbf{83.44\%}) }&\makecell{ 3.98s / \textbf{2.96s} \\ 10078 / 46166 (\textbf{82.08\%}) }&\makecell{ 25.58s / \textbf{12.22s} \\ 11487 / 47525 (\textbf{80.53\%}) }\\
\cline{2-6}
& 4 &\makecell{ 2.28s / \textbf{1.80s} \\ 9221 / 33008 (\textbf{78.16\%}) }&\makecell{ 2.18s / \textbf{1.87s} \\ 7746 29611 (\textbf{79.26\%}) }&\makecell{ 4.52s / \textbf{3.48s} \\ 9348 / 31045 (\textbf{76.86\%}) }&\makecell{ 30.14s / \textbf{14.92s} \\ 9120 / 30223 (\textbf{76.82\%}) }\\
\cline{2-6}
& 5 &\makecell{ 2.01s / \textbf{1.75s} \\ 7254 / 21316 (\textbf{74.61\%}) }&\makecell{ 2.46s / \textbf{2.13s} \\ 6576 / 20188 (\textbf{75.43\%}) }&\makecell{ 4.09s / \textbf{3.15s} \\ 5901 / 18973 (\textbf{76.28\%}) }&\makecell{ 27.58s / \textbf{13.60s} \\ 5773 / 18126 (\textbf{75.84\%}) }\\
\cline{2-6}
& 6 &\makecell{ 1.75s / \textbf{1.47s} \\ 5142 / 12850 (\textbf{71.42\%}) }&\makecell{ \textbf{1.78s} / \textbf{1.78s} \\ 4880 / 12341 (\textbf{71.66\%}) }&\makecell{ 3.45s / \textbf{3.18s} \\ 5121 / 12336 (\textbf{70.67\%}) }&\makecell{ 23.12s / \textbf{11.94s} \\ 3878 / 10477 (\textbf{72.99\%}) }\\
\hline
\end{tabular}
\begin{tablenotes}
\item All results have been averaged over $20$ runs for a fair evaluation. 
\end{tablenotes}
\end{table*}

\subsection{Efficiency of the Improved Node Expansion}\label{section_exp_comparison}
In line with the algorithm presented earlier, the computational advantage of the improved node expansion module is eliminating the need for validity checking on the waypoint configurations of primitive paths. 
Importantly, this computational improvement is not a trade-off: It makes no difference on the result of the node expansion process, including the number of nodes expanded, the order of expansion, parent-child relationship among nodes, and the optimality of the resultant path. 
To substantiate the computational improvement, the normal node expansion module and the improved node expansion module are compared. 
The comparative assessments consider different robot kinematics, different primitive path lengths, and different distance resolutions for validity checking. 

In this comparative study, the robot kinematics are set identical to the Case $1$(tether extending to the back), Case $2$(tether extending to the right), and Case $3$(tether extending to the left) in the previous case study. 
The lengths of primitive paths are varied across multiple tests, ranging from $1$ to $6$ grids. 
Each primitive path is discretely sampled into waypoint poses of the mobile unit, based on different distance resolutions $1.0$, $0.7$, $0.4$, and $0.1$. 
Relative statistics are collected in Table.~\ref{tab:data_in_cases} as follows: 
\begin{enumerate}
\item The top row reports the computational time of the normal node expansion module and the computational time of the improved node expansion module. 
\item The bottom row provides the number of primitive paths that require waypoint discretisation and detailed validity checking, the number of primitive paths whose validity is guaranteed, and the proportion of the guaranteed valid primitive paths among all primitive paths. 
\end{enumerate}

Examining each row in Table,~\ref{tab:data_in_cases} it can be seen that, when the distance resolution for validity checking is reduced, the time required for checking the validity of each primitive path increases. 
As a result, the computational time of both node expansion module are increasing. 
However, the computational time of the improved node expansion module experiences a slower rate of growth. 
Furthermore, when comparing data in the same column, another observation is that as the length of primitive paths increases, the likelihood of the guaranteed valid primitive paths decreases. 
In the best case, with a primitive path length of 1 grid, over 95\% of primitive paths are guaranteed to be valid. 
Notably, even with unusually lengthy primitive paths (6 grids, equivalent to 6\% of the size of the map), the improved node expansion module can safely ignore the validity checking of approximately 40\% of primitive paths. 
Finally, it is crucial to highlight that the improved node expansion module consistently reduces the computational load across all testing cases. 
This is because the replacement of the validity checking of waypoint configurations is simply a sequence calculation of float-point numbers, $A$, $B$, $C$, $D$, and $\varphi$. 
Result show that except the left-top test in Case 2, the improved node expansion offers computational advantages in all other testings.  

\begin{figure}[t]
\centering
\subfigure[robot structure]{
\includegraphics[width = 0.18\textwidth]{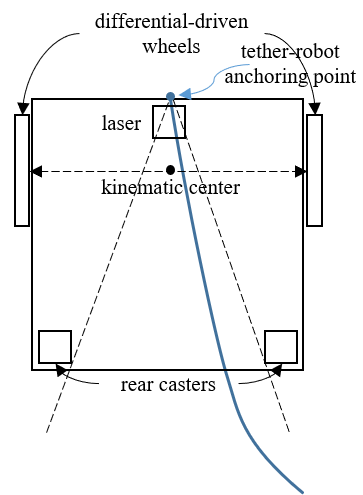}\label{fig:robot_kinematic}
}
\subfigure[the environment]{
\includegraphics[width = 0.18\textwidth]{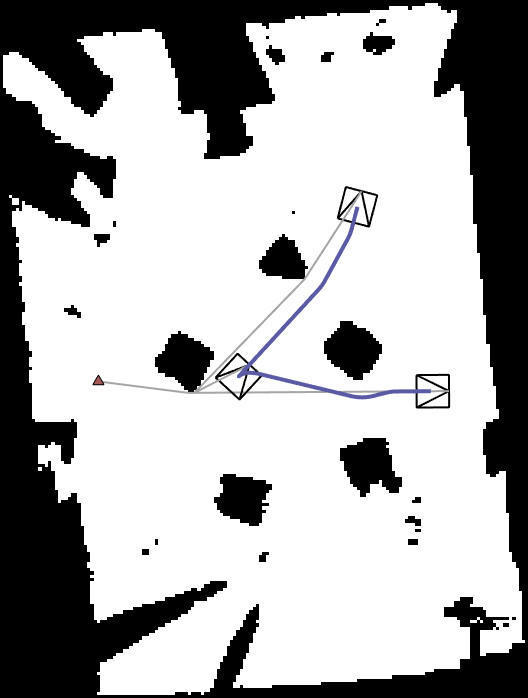}\label{fig:real_world_map}
}
\caption{(a) The real-world robot kinematics. 
The tether is anchoring at the front bottom of the robot chassis, below the laser, and extends to the back, in the middle of two rear casters. 
(b) The environment used for real-world tests, modelled by a gridmap. A resultant path is also depicted. }\label{fig:real_world}
\end{figure}

\begin{figure*}[t]
\centering
\subfigure[Initial Configuration]{
\includegraphics[width=0.23\textwidth]{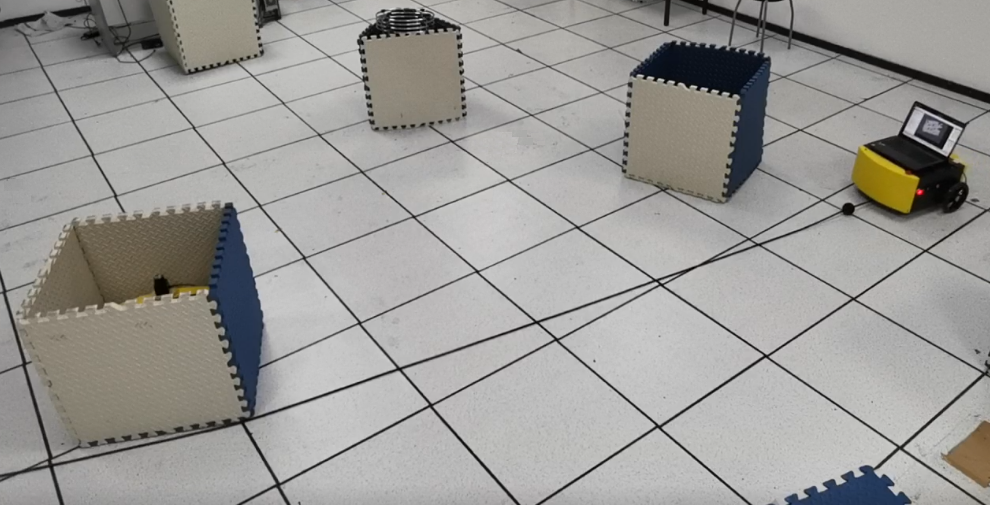}
}
\subfigure[Intermediate Configuration 1]{
\includegraphics[width=0.23\textwidth]{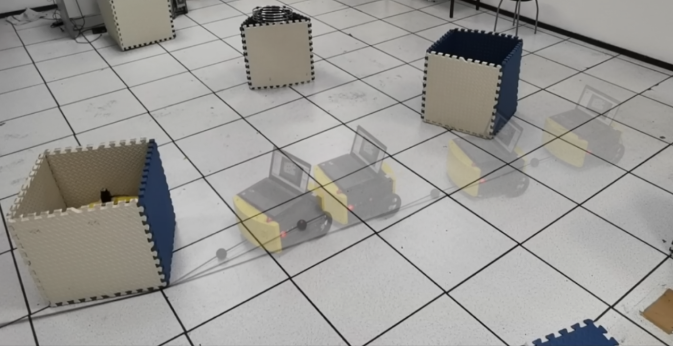}}
\subfigure[Intermediate Configuration 2]{
\includegraphics[width=0.23\textwidth]{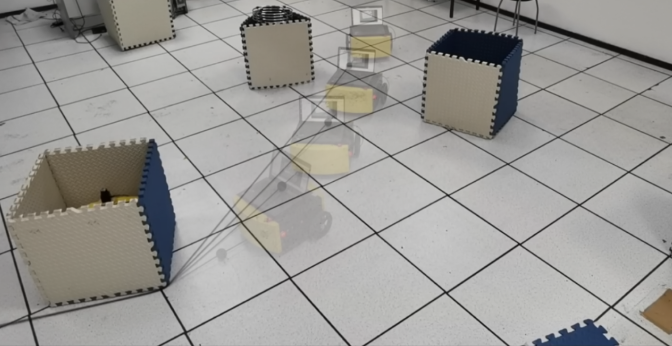}}
\subfigure[Target Configuration]{
\includegraphics[width=0.23\textwidth]{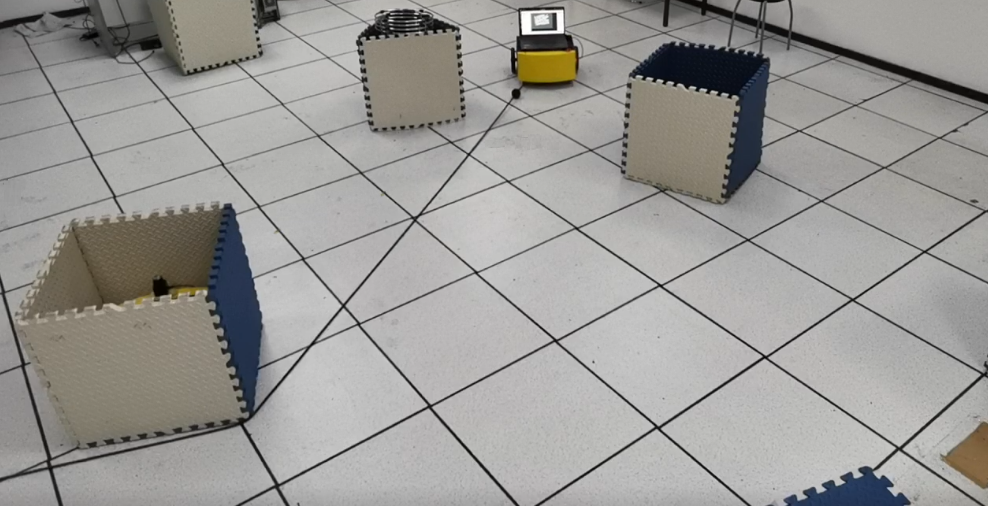}}

\subfigure[Initial Configuration ]{
\includegraphics[width=0.23\textwidth]{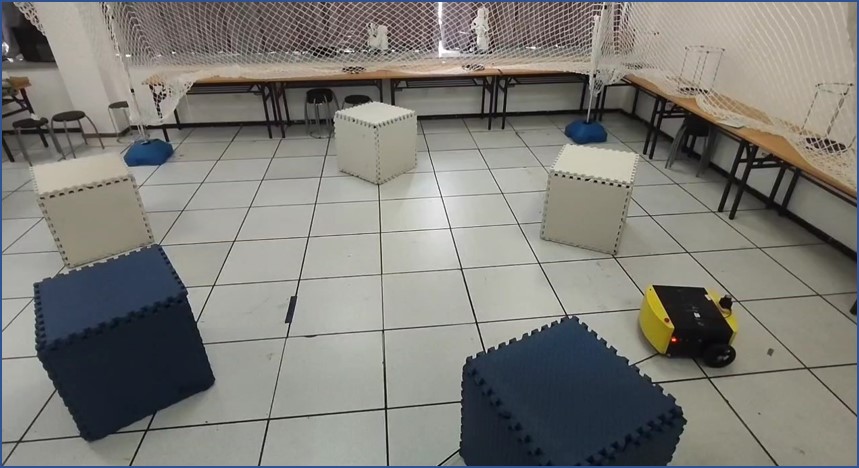}
}
\subfigure[Intermediate Configuration 1]{
\includegraphics[width=0.23\textwidth]{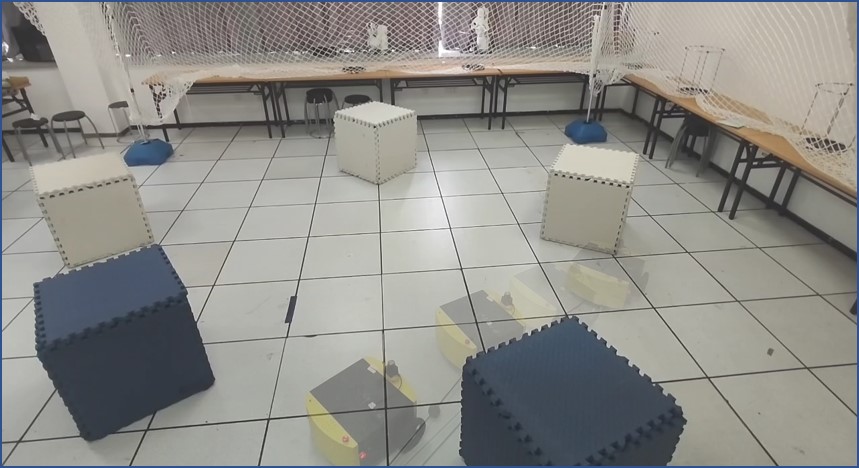}}
\subfigure[Intermediate Configuration 2]{
\includegraphics[width=0.23\textwidth]{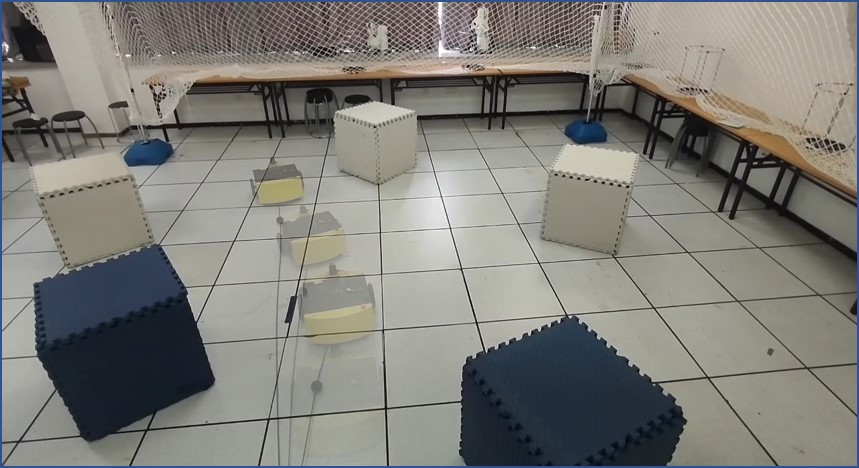}}
\subfigure[Target Configuration]{
\includegraphics[width=0.23\textwidth]{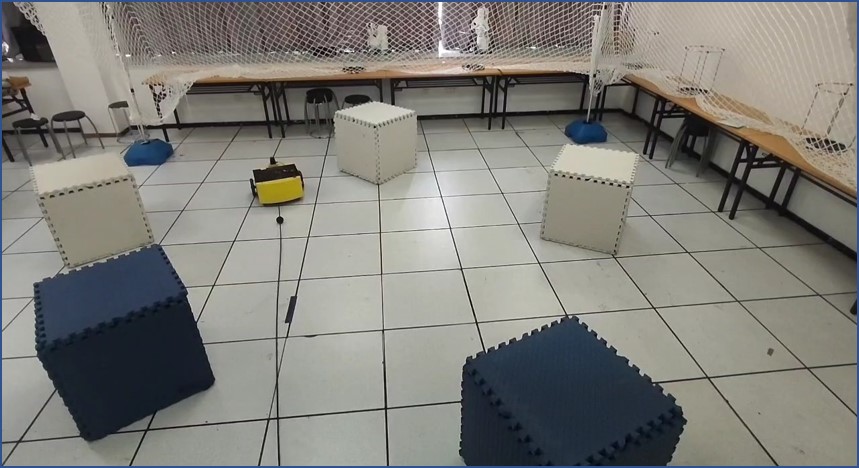}}

\subfigure[Initial Configuration]{
\includegraphics[width=0.23\textwidth]{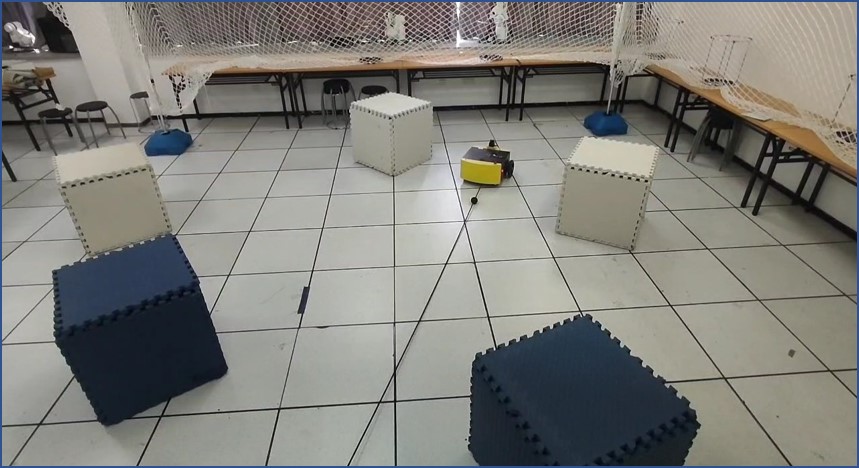}}
\subfigure[Intermediate Configuration 1]{
\includegraphics[width=0.23\textwidth]{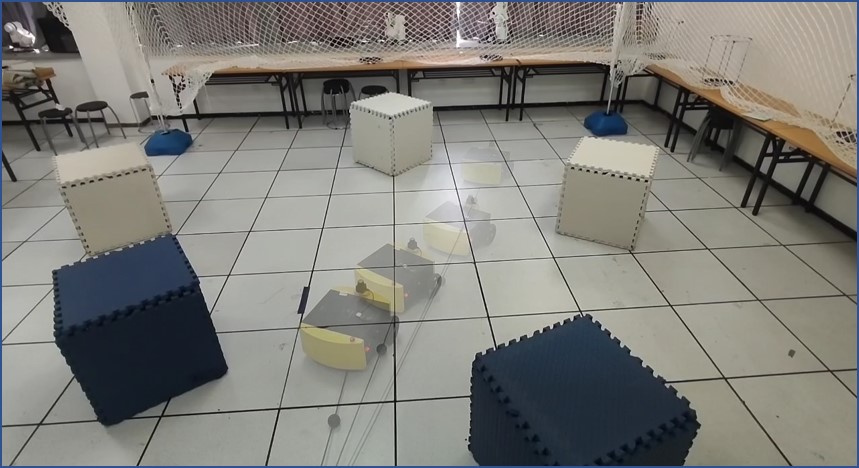}}
\subfigure[Intermediate Configuration 2]{
\includegraphics[width=0.23\textwidth]{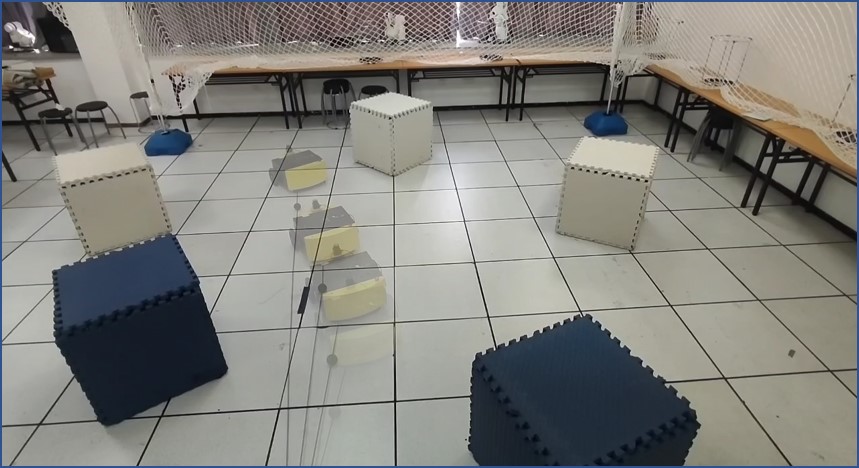}}
\subfigure[Target Configuration]{
\includegraphics[width=0.23\textwidth]{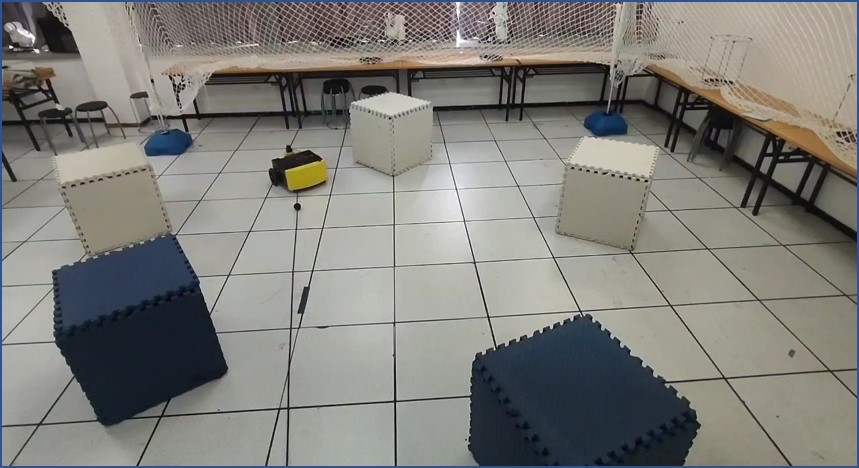}}

\subfigure[Initial Configuration]{
\includegraphics[width=0.23\textwidth]{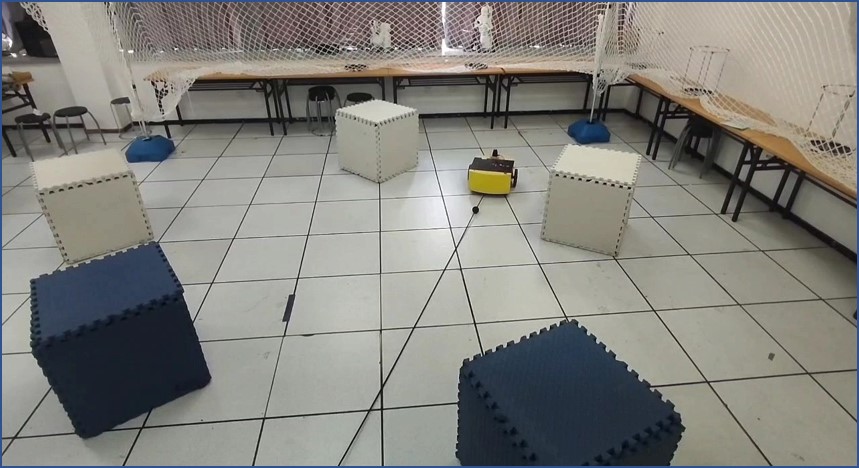}}
\subfigure[Intermediate Configuration 1]{
\includegraphics[width=0.23\textwidth]{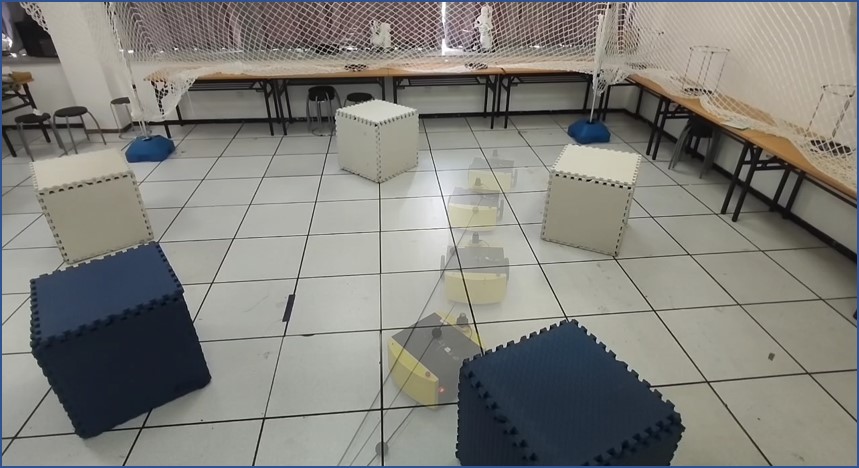}}
\subfigure[Intermediate Configuration 2]{
\includegraphics[width=0.23\textwidth]{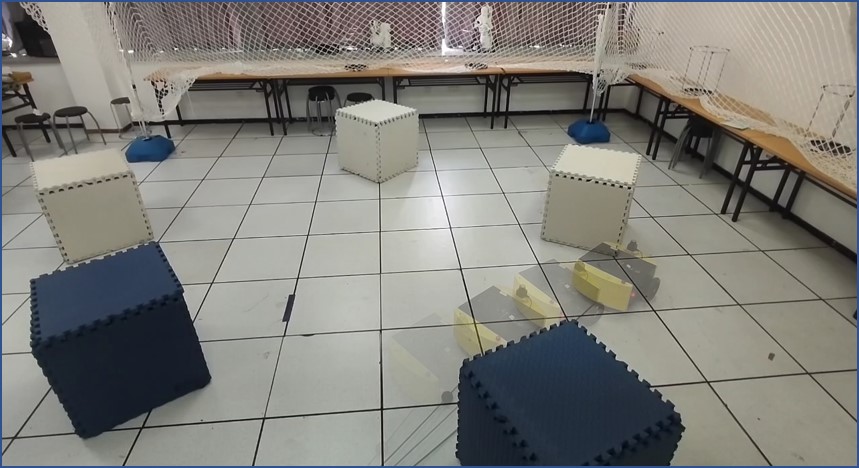}}
\subfigure[Target Configuration]{
\includegraphics[width=0.23\textwidth]{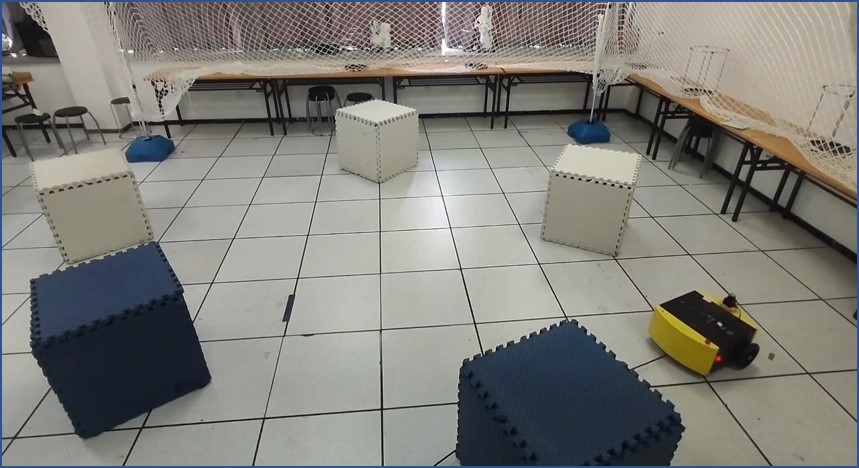}}
\caption{Video stills of the tethered robot motions to track the paths. }\label{fig:real_world_stills}
\end{figure*}

\subsection{Real-world Illustrations}\label{section_realworld}
Finally, the proposed algorithm is evaluated in four real-world scenarios. 
The structure of the robot is depicted in Fig.~\ref{fig:robot_kinematic}. 
The front wheels are differential-driven and the rear wheels are passive casters. 
No omni-directional tether retracting mechanism is equipped on the robot. 
Maps are off-line pre-constructed, as shown in Fig.~\ref{fig:real_world_map}.
Throughout all testing cases, the proposed algorithm successfully generates self-entanglement-free resultant paths for the differential-driven robot. 
The reader is referred to Fig.~\ref{fig:real_world_stills} for the illustrations of the real-world testings and the supplementary video for real-world executions. 

\section{Conclusion and Future Work}
\label{section_conclusion}
This work presents a novel mechanism for generating self-entanglement-free (SEF) paths for tethered differential-driven robots. 
The primary motivation of this work is the tethered path planning problem in the absence of an omni-directional tether-robot anchoring mechanism. 
No existing algorithm has previously addressed this self-entanglement phenomenon. 
The proposed algorithm is a searching-based constrained path planner that generates sub-optimal valid path for differential-driven robots within any planar map. 
A series of simulated case studies, comparative analysis, and real-world experiments conducted in challenging scenarios have proven the effectiveness of the proposed algorithm. 
These have been supplemented by an open-sourced implementation for the benefit of the community. 

The potentials for further development of the proposed algorithm exist in several key directions:
First, from an algorithmic perspective, the sparsity of validity checking can be developed when tether-obstacle contact points is changing during a primitive motion. 
Second, addressing the movement errors during real-world robot execution, particularly when deviations from the predefined SEF path occur, will become an urgent need. 
Online path deformation mechanism can be explored for this purpose. 
Last but not least, given the imprecise modelling of the real-world environment, implementing a maximal likelihood estimation approach to determine the state of the tether, including the last tether-obstacle contact point, would be crucial for robust SEF movement. 
\appendices

\section{Proof of Relative Angle Monotonicity (F-R)}\label{sec:appendix_SEF_FR}

The pivoting centre of the robot's circular movement is 
\begin{equation}\label{eqn:FR_pivoting_center}
\left(x_0 + R\cos(\theta_0 - \frac{\pi}{2}), y_0 + R\sin(\theta_0 - \frac{\pi}{2})\right)
\end{equation}
The robot path is parameterised as 
\begin{equation}
\alpha(t) = (x(t), y(t), \theta(t)),\ t\in (0, t_{\rm max})
\end{equation}
where $x(t)$, $y(t)$, and $\theta(t)$ are calculated as follows: 
\begin{align}\label{eqn:xytheta}
x(t) &= x_0 + R\cos(\theta_0 - \frac{\pi}{2}) - R\cos(\theta_0 - \frac{\pi}{2}-t)\\
&=x_0 +R\sin\theta_0-R\sin(\theta_0-t)\notag \\
y(t) &= y_0 + R\sin(\theta_0 - \frac{\pi}{2}) - R\sin(\theta_0 - \frac{\pi}{2}-t)\\
&=y_0-R\cos\theta_0 + R\cos(\theta_0-t)\notag \\
\theta(t) &= \theta_0 - t
\end{align}
The path of $s$ can be calculated as 
\begin{align}
\tilde{x}(t) =& x(t)  + \cos(\theta_0-t)\Delta x - \sin(\theta_0-t)\Delta y\\
\tilde{y}(t) =& y(t) + \sin(\theta_0-t)\Delta x + \cos(\theta_0-t)\Delta y
\end{align}
The relative angle function is expressed as follows
\begin{equation}
\Phi(t) = \arctan\left(\frac{o_y-\tilde{y}(t)}{o_x-\tilde{x}(t)}\right)-\theta(t),\ t\in (0, t_{\rm max})
\end{equation}
Calculating the derivative of $\tilde{x}$, $\tilde{y}$, and $\theta$ with respect to $t$ gives
\begin{align}
&\frac{d\tilde{x}}{dt} = R\cos(\theta_0-t)+\sin(\theta_0-t)\Delta x +\cos(\theta_0-t)\Delta y\\
&\frac{d\tilde{y}}{dt} = R\sin(\theta_0-t) - \cos(\theta_0-t)\Delta x + \sin(\theta_0-t)\Delta y\\
&\frac{d\theta}{dt} = -1
\end{align}
Then, the derivative of $\Phi$ is calculated as 
\begin{equation}
\frac{d\Phi}{dt} = \frac{-\frac{d\tilde{y}}{dt}(o_x-\tilde{x})+\frac{d\tilde{x}}{dt}(o_y-\tilde{y})}{(o_y - \tilde{y})^2 + (o_x - \tilde{x})^2} -\frac{d\theta}{dt}
\end{equation}
We ignore the denominator by introducing $\frac{d\tilde{\Phi}}{dt}$:
\begin{equation}\label{eqn:FR_tilde_Phi_1}
\begin{aligned}
\frac{d\tilde{\Phi}}{dt} \triangleq& \left( (o_x-\tilde{x})^2+(o_y-\tilde{y})^2 \right)\frac{d\Phi}{dt}\\
=&-\frac{d\tilde{y}}{dt}(o_x-\tilde{x}) + \frac{d\tilde{x}}{dt}(o_y-\tilde{y}) +(o_x-\tilde{x})^2+(o_y-\tilde{y})^2\\
=& (o_x - \tilde{x}-\frac{d\tilde{y}}{dt})(o_x - \tilde{x}) + (o_y + \frac{d\tilde{x}}{dt} - \tilde{y})(o_y - \tilde{y})
\end{aligned}
\end{equation}
By calculation, 
\begin{align}
o_x - \tilde{x} - \frac{d\tilde{y}}{dt} =& o_x - x(t) - R\sin(\theta_0-t)\\
=& o_x - x_0 -R\sin\theta_0\notag\\
o_y + \frac{d\tilde{x}}{dt} - \tilde{y} =& o_y - y(t) + R\cos(\theta_0-t)\\
=& o_y - y_0 +R\cos\theta_0
\end{align}
hence
\begin{equation}\label{eqn:FR_tilde_Phi_2}
\begin{aligned}
&\cdots (\mbox{Eqn.~(\ref{eqn:FR_tilde_Phi_1}}))\\
=& (o_x - x_0 -R\sin\theta_0)\\
&\qquad*(o_x-x(t)-\cos(\theta_0-t)\Delta x + \sin(\theta_0-t)\Delta y)\\
&+ (o_y - y_0 +R\cos\theta_0)\\
&\qquad*(o_y - y(t) - \sin(\theta_0-t)\Delta x - \cos(\theta_0-t)\Delta y)
\end{aligned}
\end{equation}
By letting 
\begin{align}
&A \triangleq o_x - x_0 - R\sin\theta_0\\
&B \triangleq o_y - y_0 + R\cos\theta_0
\end{align}
the equation is further simplified to 
\begin{equation}
\begin{aligned}
&\cdots (\mbox{Eqn.~(\ref{eqn:FR_tilde_Phi_2}}))\\
=& A(A + R\sin(\theta_0-t) - \cos(\theta_0-t)\Delta x+ \sin(\theta_0-t)\Delta y)\\
&+ B(B - R\cos(\theta_0-t) - \sin(\theta_0-t)\Delta x - \cos(\theta_0-t)\Delta y)\\
=& -\cos(\theta_0-t)[A\Delta x+ BR + B\Delta y]\\
&+ \sin(\theta_0-t)[AR + A\Delta y - B\Delta x] + A^2 + B^2
\end{aligned}
\end{equation}
Finally, we have
\begin{equation}
\frac{d\tilde{\Phi}}{dt} = -\sqrt{C^2+D^2}\cos(t-\theta_0-\varphi) + A^2 + B^2
\end{equation}
where 
\begin{align}
C &\triangleq A\Delta x + BR + B\Delta y\\
D &\triangleq AR + A\Delta y - B\Delta x
\end{align}
and $\varphi$ is the angle such that 
\begin{equation}
\cos\varphi = \frac{C}{\sqrt{C^2+D^2}},\ \sin\varphi = \frac{D}{\sqrt{C^2 + D^2}}
\end{equation}
Hence, the cases when $\Phi$ is monotonic are calculated as
\begin{equation}
\begin{aligned}
&\frac{d\Phi}{dt} > 0,\ \forall t\in (0, t_{\rm max})\\
\Leftrightarrow& \frac{d\tilde{\Phi}}{dt} > 0,\ \forall t\in (0, t_{\rm max})\\
\Leftrightarrow& A^2+B^2 > \sqrt{C^2+D^2}\cos(t-\theta_0-\varphi),\ \forall t\in (0, t_{\rm max})\\
\Leftrightarrow& \frac{A^2+B^2}{\sqrt{C^2+D^2}} > \cos(t-\theta_0-\varphi),\ \forall t\in (0, t_{\rm max})
\end{aligned}
\end{equation}
or
\begin{equation}
\begin{aligned}
&\frac{d\Phi}{dt} < 0,\ \forall t\in (0, t_{\rm max})\\
\Leftrightarrow& \frac{d\tilde{\Phi}}{dt} < 0,\ \forall t\in (0, t_{\rm max}) \\
\Leftrightarrow& A^2 + B^2 < \sqrt{C^2+D^2}\cos(t-\theta_0-\varphi),\ \forall t\in (0, t_{\rm max})\\
\Leftrightarrow& \frac{A^2+B^2}{\sqrt{C^2+D^2}} < \cos(t-\theta_0-\varphi),\ \forall t\in (0, t_{\rm max})
\end{aligned}
\end{equation}

\section{Proof of Relative Angle Monotonicity (F-L)}\label{sec:appendix_SEF_FL}

The pivoting centre of the robot's circular movement is 
\begin{equation}
\left(x_0 + R\cos(\theta_0 + \frac{\pi}{2}), y_0 + R\sin(\theta_0 + \frac{\pi}{2})\right)
\end{equation}
The robot path is parameterised as 
\begin{equation}
\alpha(t) = (x(t), y(t), \theta(t)),\ t\in (0, t_{\rm max})
\end{equation}
where $x(t)$, $y(t)$, and $\theta(t)$ are calculated as follows: 
\begin{align}
x(t) &= x_0 + R\cos(\theta_0 + \frac{\pi}{2}) - R\cos(\theta_0 + \frac{\pi}{2}+t)\\
&=x_0 -R\sin\theta_0+R\sin(\theta_0+t)\notag \\
y(t) &= y_0 + R\sin(\theta_0 + \frac{\pi}{2}) - R\sin(\theta_0 + \frac{\pi}{2}+t)\\
&=y_0-R\cos\theta_0 - R\cos(\theta_0+t)\notag \\
\theta(t) &= \theta_0 + t
\end{align}
The path of $s$ can be calculated as 
\begin{align}
\tilde{x}(t) &= x(t) + \cos(\theta_0 + t)\Delta x - \sin(\theta_0+t)\Delta y\\
\tilde{y}(t) &= y(t) + \sin(\theta_0 + t)\Delta x + \cos(\theta_0+t)\Delta y
\end{align}
Then the relative angle function is expressed as follows
\begin{equation}
\Phi(t) = \arctan\left(\frac{o_y-\tilde{y}(t)}{o_x-\tilde{x}(t)}\right)-\theta(t),\ t\in (0, t_{\rm max})
\end{equation}
Calculating the derivative of $\tilde{x}$, $\tilde{y}$, and $\theta$ with respect to $t$ gives
\begin{align}
&\frac{d\tilde{x}}{dt} = R\cos(\theta_0+t) - \sin(\theta_0+t)\Delta x - \cos(\theta_0+t)\Delta y\\
&\frac{d\tilde{y}}{dt} = R\sin(\theta_0+t) + \cos(\theta_0+t)\Delta x - \sin(\theta_0+t)\Delta y\\
&\frac{d\theta}{dt} = 1
\end{align}
Then, the derivative of $\Phi$ is calculated as 
\begin{equation}
\frac{d\Phi}{dt} = \frac{-\frac{d\tilde{y}}{dt}(o_x-\tilde{x})+\frac{d\tilde{x}}{dt}(o_y-\tilde{y})}{(o_y - \tilde{y})^2 + (o_x - \tilde{x})^2} -\frac{d\theta}{dt}
\end{equation}
We ignore the denominator by introducing $\frac{d\tilde{\Phi}}{dt}$: 
\begin{equation}\label{eqn:FL_tilde_Phi_1}
\begin{aligned}
\frac{d\tilde{\Phi}}{dt} \triangleq& \left( (o_x-\tilde{x})^2+(o_y-\tilde{y})^2 \right)\frac{d\Phi}{dt}\\
=&-\frac{d\tilde{y}}{dt}(o_x-\tilde{x}) + \frac{d\tilde{x}}{dt}(o_y-\tilde{y}) - (o_x-\tilde{x})^2 - (o_y-\tilde{y})^2\\
=&(-o_x + \tilde{x} - \frac{d\tilde{y}}{dt})(o_x - \tilde{x}) + (-o_y + \frac{d\tilde{x}}{dt} + \tilde{y})(o_y - \tilde{y})
\end{aligned}
\end{equation}
By calculation, 
\begin{align}
-o_x + \tilde{x} - \frac{d\tilde{y}}{dt} =& -o_x + x(t) - R\sin(\theta_0+t)\\
=& -o_x + x_0 - R\sin\theta_0\notag\\
-o_y + \frac{d\tilde{x}}{dt} + \tilde{y} =& o_y + y(t) + R\cos(\theta_0+t)\\
=& -o_y + y_0 - R\cos\theta_0\notag
\end{align}
hence 
\begin{equation}\label{eqn:FL_tilde_Phi_2}
\begin{aligned}
&\cdots(\mbox{Eqn.~(\ref{eqn:FL_tilde_Phi_1})})\\
=&(-o_x + x_0- R\sin\theta_0)\\
&\qquad*(o_x-x(t) - \cos(\theta_0+t)\Delta x + \sin(\theta_0+t)\Delta y)\\
&+(-o_y + y_0 - R\cos\theta_0)\\
&\qquad*(o_y - y(t) - \sin(\theta_0+t)\Delta x - \cos(\theta_0+t)\Delta y)
\end{aligned}
\end{equation}
By letting 
\begin{align}
A &\triangleq o_x - x_0 + R\sin\theta_0\\
B &\triangleq o_y - y_0 + R\cos\theta_0
\end{align}
the equation is further simplified to 
\begin{equation}
\begin{aligned}
&\cdots(\mbox{Eqn.~(\ref{eqn:FL_tilde_Phi_2})})\\
=& -A(A - R\sin(\theta_0+t) - \cos(\theta_0+t)\Delta x + \sin(\theta_0+t)\Delta y)\\
&-B(B + R\cos(\theta_0+t) - \sin(\theta_0+t)\Delta x - \cos(\theta_0+t)\Delta y)\\
=& \cos(\theta_0+t)[A\Delta x - BR + B\Delta y]\\
&+ \sin(\theta_0+t)[AR-A\Delta y + B\Delta x] - A^2 - B^2
\end{aligned}
\end{equation}
Finally,we have 
\begin{equation}
\frac{d\tilde{\Phi}}{dt} = \sqrt{C^2+D^2}\cos(t+\theta_0-\varphi) - A^2- B^2
\end{equation}
where 
\begin{align}
C &\triangleq A\Delta x - BR + B\Delta y\\
D &\triangleq AR - A\Delta y + B\Delta x
\end{align}
and $\varphi$ is the angle such that 
\begin{equation}
\cos\varphi = \frac{C}{\sqrt{C^2+D^2}},\ \sin\varphi = \frac{D}{\sqrt{C^2+D^2}}
\end{equation}
Hence, the cases when $\Phi$ is monotonic are calculated as
\begin{equation}
\begin{aligned}
&\frac{d\Phi}{dt} > 0,\ \forall t\in (0, t_{\rm max})\\
\Leftrightarrow& \frac{d\tilde{\Phi}}{dt} > 0,\ \forall t\in (0, t_{\rm max})\\
\Leftrightarrow& A^2+B^2 < \sqrt{C^2+D^2}\cos(t+\theta_0-\varphi),\ \forall t\in (0, t_{\rm max})\\
\Leftrightarrow& \frac{A^2+B^2}{\sqrt{C^2+D^2}} < \cos(t+\theta_0-\varphi),\ \forall t\in (0, t_{\rm max})
\end{aligned}
\end{equation}
or 
\begin{equation}
\begin{aligned}
&\frac{d\Phi}{dt} < 0,\ \forall t\in (0, t_{\rm max})\\
\Leftrightarrow& \frac{d\tilde{\Phi}}{dt} < 0,\ \forall t\in (0, t_{\rm max}) \\
\Leftrightarrow& A^2+B^2 > \sqrt{C^2+D^2}\cos(t+\theta_0-\varphi),\ \forall t\in (0, t_{\rm max})\\
\Leftrightarrow& \frac{A^2+B^2}{\sqrt{C^2+D^2}} > \cos(t+\theta_0-\varphi),\ \forall t\in (0, t_{\rm max})
\end{aligned}
\end{equation}

\section{Proof of Tether Length Monotonicity (Straight)}\label{sec:appendix_TLA_Straight}
The robot path is parameterised as 
\begin{align}
x(t) &= x_0 + t\cos\theta_0\\
y(t) &= y_0 + t\sin\theta_0\\
\theta(t) &= \theta_0
\end{align}
Then, the path of $s$ is 
\begin{align}
\tilde{x}(t) =& x_0 + t\cos\theta_0 + \cos\theta_0\Delta x - \sin\theta_0\Delta y\\
\tilde{y}(t) =& y_0 + t\sin\theta_0 + \sin\theta_0\Delta x + \cos\theta_0\Delta y
\end{align}
The tether length function is expressed as follows
\begin{equation}
L(t) = L_o + \sqrt{(o_x - \tilde{x}(t))^2 + (o_y - \tilde{y}(t))^2}
\end{equation}
where $L_o$ is the consumed tether length from the base point to the last tether-obstacle contact point $o$. 
Calculating the derivative of $\tilde{x}$, $\tilde{y}$, and $\theta$ with respect to $t$ gives
\begin{align}
\frac{d\tilde{x}}{dt} &= \cos\theta_0\\
\frac{d\tilde{y}}{dt} &= \sin\theta_0\\
\frac{d\theta}{dt} &= 0
\end{align}
After that, the derivative of $L$ with respect to $t$ is calculated as 
\begin{equation}
\frac{dL}{dt} = \frac{-(o_x-\tilde{x})\frac{d\tilde{x}}{dt} - (o_y - \tilde{y})\frac{d\tilde{y}}{dt}}{\sqrt{(o_x-\tilde{x})^2 + (o_y - \tilde{y})^2}}
\end{equation}
We ignore the denominator by introducing $\frac{d\tilde{L}}{dt}$: 
\begin{equation}
\begin{aligned}
\frac{d\tilde{L}}{dt} \triangleq& \sqrt{(o_x-\tilde{x})^2 + (o_y - \tilde{y})^2} \frac{dL}{dt}\\
=& -(o_x - x_0 - t\cos\theta_0-\cos\theta_0\Delta x + \sin\theta_0\Delta y)\cos\theta_0\\
&- (o_y - y_0 - t\sin\theta_0 - \sin\theta_0\Delta x - \cos\theta_0\Delta y)\sin\theta_0\\
=& t - (o_x + o_y - x_0\cos\theta_0 - y_0\sin\theta_0 - \Delta x)
\end{aligned}
\end{equation}
Hence, the cases when $L$ is monotonic are calculated as
\begin{equation}
\begin{aligned}
&\frac{dL}{dt} >0,\ \forall t\in (0, t_{\rm max})\\
\Leftrightarrow& \frac{d\tilde{L}}{dt}>0,\ \forall t\in (0, t_{\rm max})\\
\Leftrightarrow& 0 > o_x + o_y - x_0\cos\theta_0 - y_0\sin\theta_0 - \Delta x
\end{aligned}
\end{equation}
or 
\begin{equation}
\begin{aligned}
&\frac{dL}{dt} < 0,\ \forall t\in (0, t_{\rm max})\\
\Leftrightarrow& \frac{d\tilde{L}}{dt} < 0,\ \forall t\in (0, t_{\rm max})\\
\Leftrightarrow& t_{\rm max} < o_x + o_y - x_0\cos\theta_0 - y_0\sin\theta_0 - \Delta x
\end{aligned}
\end{equation}

\section{Proof of Tether Length Monotonicity (F-R)}\label{sec:appendix_TLA_FR}
The pivoting centre of the robot's circular movement is 
\begin{equation}
\left( x_0 + R\cos(\theta_0 - \frac{\pi}{2}),\ y_0 + R\sin(\theta_0 - \frac{\pi}{2}) \right)
\end{equation}
The robot path is parameterised as 
\begin{equation}
\alpha(t)= (x(t), y(t), \theta(t)), \forall t\in (0, t_{\rm max})
\end{equation}
where $x(t)$, $y(t)$, and $\theta(t)$ are calculated as follows: 
\begin{align}
x(t) &= x_0 + R\cos(\theta_0 - \frac{\pi}{2}) - R\cos(\theta_0 - \frac{\pi}{2}-t)\\
&=x_0 +R\sin\theta_0-R\sin(\theta_0-t)\notag \\
y(t) &= y_0 + R\sin(\theta_0 - \frac{\pi}{2}) - R\sin(\theta_0 - \frac{\pi}{2}-t)\\
&=y_0-R\cos\theta_0 + R\cos(\theta_0-t)\notag \\
\theta(t) &= \theta_0 - t
\end{align} 
The path of $s$ can be calculated as
\begin{align}
\tilde{x}(t) =& x(t) + \cos(\theta_0-t)\Delta x - \sin(\theta_0-t)\Delta y\\
=& x_0 + R\sin\theta_0 - (R+\Delta y)\sin(\theta_0-t) + \cos(\theta_0-t)\Delta x  \notag\\
\tilde{y}(t) =& y(t) + \sin(\theta_0-t)\Delta x + \cos(\theta_0-t)\Delta y\\
=& y_0 - R\cos\theta_0 + (R+\Delta y)\cos(\theta_0-t) + \sin(\theta_0-t)\Delta x\notag
\end{align}
The tether length function is expressed as follows
\begin{equation}
L(t) = L_o + \sqrt{ (o_x-\tilde{x}(t))^2 + (o_y - \tilde{y}(t))^2}  
\end{equation}
where $L_o$ is the consumed tether length from the base point to the last tether-obstacle contact point $o$. 
Calculating the derivative of $\tilde{x}$, $\tilde{y}$, and $\theta$ with respect to $t$ gives
\begin{align}
\frac{d\tilde{x}}{dt} =& (R+\Delta y) \cos(\theta_0-t) + \sin(\theta_0-t)\Delta x\\
\frac{d\tilde{y}}{dt} =& (R+\Delta y)\sin(\theta_0-t) - \cos(\theta_0-t)\Delta x\\
\frac{d\theta}{dt} =& -1
\end{align}
The derivative of $L$ is calculated as
\begin{equation}
\frac{dL}{dt} = \frac{-(o_x-\tilde{x})\frac{d\tilde{x}}{dt} - (o_y - \tilde{y})\frac{d\tilde{y}}{dt}}{\sqrt{(o_x-\tilde{x})^2 + (o_y - \tilde{y})^2}}
\end{equation}
We ignore the denominator by introducing $\frac{d\tilde{L}}{dt}$: 
\begin{equation}\label{eqn:FR_tilde_L_1}
\begin{aligned}
\frac{d\tilde{L}}{dt} \triangleq& \sqrt{(o_x - \tilde{x})^2+(o_y - \tilde{y})^2}\frac{dL}{dt}\\
=& -(o_x - x_0 - R\sin\theta_0 + (R+\Delta y)\sin(\theta_0-t) - \cos(\theta_0-t)\Delta x)\\
&*((R+\Delta y)\cos(\theta_0-t) + \sin(\theta_0-t)\Delta x)\\
&-(o_y - y_0 + R\sin\theta_0 - (R+\Delta y)\cos(\theta_0-t) - \sin(\theta_0-t)\Delta x)\\
&*((R+\Delta y)\sin(\theta_0-t) - \cos(\theta_0-t)\Delta x)
\end{aligned}
\end{equation}
By letting 
\begin{align}
A\triangleq& o_x - x_0 - R\sin\theta_0\\
B\triangleq& o_y - y_0 + R\sin\theta_0
\end{align}
the equation is further simplified to 
\begin{equation}
\begin{aligned}
&\cdots(\mbox{Eqn.~(\ref{eqn:FR_tilde_L_1})})\\
=& \sin(\theta_0-t)\cos(\theta_0-t)[-(R+\Delta y)^2 + \Delta x^2 + (R+\Delta y)^2 - \Delta x^2]\\
&+ \sin^2(\theta_0-t)[-(R+\Delta y)\Delta x + \Delta x(R + \Delta y)]\\
&+\cos^2(\theta_0-t)[\Delta x(R + \Delta y) - (R+\Delta y)\Delta x]\\
&-A((R+\Delta y)\cos(\theta_0-t)+\sin(\theta_0-t)\Delta x))\\
&-B((R + \Delta y)\sin(\theta_0-t) - \cos(\theta_0-t)\Delta x)\\
=& (B\Delta x - AR - A\Delta y)\cos(\theta_0-t)\\ 
&\qquad- (A\Delta x - BR - B\Delta y)\sin(\theta_0-t)
\end{aligned}
\end{equation}
Further simplify the notations, we have 
\begin{equation}
\frac{d\tilde{L}}{dt} = \sqrt{C^2 + D^2}\cos(t-\theta_0 - \varphi)
\end{equation}
where 
\begin{align}
&C \triangleq B\Delta x - AR - A\Delta y\\
&D \triangleq A\Delta x - BR - B\Delta y
\end{align}
and $\varphi$ is the angle such that 
\begin{equation}
\cos\varphi = \frac{C}{\sqrt{C^2+D^2}},\ \sin\varphi = \frac{D}{\sqrt{C^2+D^2}}
\end{equation}
Hence, the cases when $L$ is monotonic are calculated as 
\begin{equation}
\begin{aligned}
&\frac{dL}{dt} > 0,\ \forall t\in (0, t_{\rm max})\\
\Leftrightarrow&\frac{d\tilde{L}}{dt} > 0,\ \forall t\in (0, t_{\rm max})\\
\Leftrightarrow& \sqrt{C^2+D^2}\cos(t - \theta_0 - \varphi) > 0,\ \forall t\in (0, t_{\rm max})\\
\Leftrightarrow& \cos(t - \theta_0 - \varphi) > 0,\ \forall t\in (0, t_{\rm max})
\end{aligned}
\end{equation}
or 
\begin{equation}
\begin{aligned}
&\frac{dL}{dt} < 0,\ \forall t\in (0, t_{\rm max})\\
\Leftrightarrow& \frac{d\tilde{L}}{dt} < 0,\ \forall t\in (0, t_{\rm max})\\
\Leftrightarrow& \sqrt{C^2+D^2}\cos(t - \theta_0 - \varphi) < 0,\ \forall t\in (0, t_{\rm max})\\
\Leftrightarrow& \cos(t - \theta_0 - \varphi) < 0,\ \forall t\in (0, t_{\rm max})
\end{aligned}
\end{equation}

\section{Proof of Tether Length Monotonicity (F-L)}\label{sec:appendix_TLA_FL}

The pivoting centre of the robot's circular movement is 
\begin{equation}
\left(x_0 + R\cos(\theta_0 + \frac{\pi}{2}),\ y_0 + R\sin(\theta_0 + \frac{\pi}{2})\right)
\end{equation}
The robot path is parameterised as 
\begin{equation}
\alpha(t) = (x(t), y(t), \theta(t)),\ t\in (0, t_{\rm max})
\end{equation}
where $x(t)$, $y(t)$, and $\theta(t)$ are calculated as follows: 
\begin{align}
x(t) &= x_0 + R\cos(\theta_0 + \frac{\pi}{2}) - R\cos(\theta_0 + \frac{\pi}{2}+t)\\
&=x_0 -R\sin\theta_0+R\sin(\theta_0+t)\notag \\
y(t) &= y_0 + R\sin(\theta_0 + \frac{\pi}{2}) - R\sin(\theta_0 + \frac{\pi}{2}+t)\\
&=y_0-R\cos\theta_0 - R\cos(\theta_0+t)\notag \\
\theta(t) &= \theta_0 + t
\end{align}
The path of $s$ can be calculated as
\begin{align}
\tilde{x}(t) =&x(t) + \cos(\theta_0+t)\Delta x - \sin(\theta_0+t)\Delta y\\
=& x_0-R\sin\theta_0 + (R - \Delta y)\sin(\theta_0+t) + \cos(\theta_0+t)\Delta x\notag\\
\tilde{y}(t) =& y(t) + \sin(\theta_0+t)\Delta x + \cos(\theta_0+t)\Delta y\\
=& y_0 - R\cos\theta_0 - (R-\Delta y)\cos(\theta_0+t) + \sin(\theta_0+t)\Delta x \notag
\end{align}
The tether length function is expressed as follows
\begin{equation}
L(t) = L_o + \sqrt{ (o_x-\tilde{x}(t))^2 + (o_y - \tilde{y}(t))^2}  
\end{equation}
where $L_o$ is the consumed tether length from the base point to the last tether-obstacle contact point $o$. 
Calculating the derivative of $\tilde{x}$, $\tilde{y}$, and $\theta$ with respect to $t$ gives
\begin{align}
&\frac{d\tilde{x}}{dt} = (R-\Delta y)\cos(\theta_0+t) - \sin(\theta_0+t)\Delta x\\
&\frac{d\tilde{y}}{dt} = (R-\Delta y)\sin(\theta_0+t) +\cos(\theta_0+t)\Delta x\\
&\frac{d\theta}{dt} = 1
\end{align}
The derivative of $L$ is calculated as
\begin{equation}
\frac{dL}{dt} = \frac{-(o_x-\tilde{x})\frac{d\tilde{x}}{dt} - (o_y - \tilde{y})\frac{d\tilde{y}}{dt}}{\sqrt{(o_x-\tilde{x})^2 + (o_y - \tilde{y})^2}}
\end{equation}
We ignore the denominator by introducing $\frac{d\tilde{L}}{dt}$: 
\begin{equation}\label{eqn:FL_tilde_L_1}
\begin{aligned}
\frac{d\tilde{L}}{dt} \triangleq& \sqrt{(o_x-\tilde{x})^2 + (o_y - \tilde{y})^2} \frac{dL}{dt}\\
=& -(o_x - x_0 + R\sin\theta_0 - (R-\Delta y)\sin(\theta_0+t) - \cos(\theta_0+t)\Delta x)\\
&*((R - \Delta y)\cos(\theta_0+t) - \sin(\theta_0+t)\Delta x)\\
&-(o_y - y_0 + R\cos\theta_0 + (R-\Delta y)\cos(\theta_0+t) - \sin(\theta_0+t)\Delta x)\\
&*((R - \Delta y)\sin(\theta_0+t) +\cos(\theta_0+t)\Delta x)
\end{aligned}
\end{equation}
By letting 
\begin{align}
A \triangleq& o_x - x_0 + R\sin\theta_0\\
B \triangleq& o_y - y_0 + R\cos\theta_0
\end{align}
the equation is further simplified to 
\begin{equation}
\begin{aligned}
&\cdots(\mbox{Eqn.~(\ref{eqn:FL_tilde_L_1})})\\
=& \sin(\theta_0+t)\cos(\theta_0+t)[(R-\Delta y)^2 - \Delta x^2 - (R - \Delta y)^2 + \Delta x^2]\\
&+\sin^2(\theta_0+t)[-(R-\Delta y)\Delta x + \Delta x(R-\Delta y)]\\
&+\cos^2(\theta_0+t)[\Delta x(R-\Delta y) - (R-\Delta y)\Delta x]\\
&- A((R-\Delta y)\cos(\theta_0+t) - \sin(\theta_0+t)\Delta x)\\
&- B((R-\Delta y)\sin(\theta_0+t) + \cos(\theta_0+t)\Delta x)\\
=& (A\Delta y - AR - B\Delta x)\cos(\theta_0+t)\\
&\qquad - (A\Delta x - BR + B\Delta y)\sin(\theta_0+t)
\end{aligned}    
\end{equation}
Further simplify the notations, we have 
\begin{equation}
\frac{d\tilde{L}}{dt}= \sqrt{C^2 + D^2}\cos(t + \theta_0 + \varphi)
\end{equation}
where
\begin{align}
C &\triangleq A\Delta y - AR - B\Delta x\\
D &\triangleq A\Delta x - BR + B\Delta y
\end{align}
and $\varphi$ is the angle such that 
\begin{equation}
\cos\varphi = \frac{C}{\sqrt{C^2+D^2}},\ \sin\varphi = \frac{D}{\sqrt{C^2+D^2}}
\end{equation}
Hence, the cases when $L$ is monotonic are calculated as 
\begin{equation}
\begin{aligned}
&\frac{dL}{dt} > 0,\ \forall t\in (0, t_{\rm max})\\
\Leftrightarrow&\frac{d\tilde{L}}{dt} > 0,\ \forall t\in (0, t_{\rm max})\\
\Leftrightarrow& \sqrt{C^2+D^2}\cos(t + \theta_0 + \varphi)>0,\ \forall t\in (0, t_{\rm max})\\
\Leftrightarrow& \cos(t + \theta_0 + \varphi)>0,\ \forall t\in (0, t_{\rm max})
\end{aligned}
\end{equation}
or 
\begin{equation}
\begin{aligned}
&\frac{dL}{dt} < 0,\ \forall t\in (0, t_{\rm max})\\
\Leftrightarrow&\frac{d\tilde{L}}{dt} < 0,\ \forall t\in (0, t_{\rm max})\\
\Leftrightarrow& \sqrt{C^2+D^2}\cos(t + \theta_0 + \varphi)<0,\ \forall t\in (0, t_{\rm max})\\
\Leftrightarrow& \cos(t + \theta_0 + \varphi)<0,\ \forall t\in (0, t_{\rm max})
\end{aligned}
\end{equation}

\bibliographystyle{ieeetr}
\bibliography{tro23}

\begin{thebibliography}{10}

\bibitem{Nassiraei2007Concept}
A.~A. Nassiraei, Y.~Kawamura, A.~Ahrary, Y.~Mikuriya, and K.~Ishii, ``Concept
  and design of a fully autonomous sewer pipe inspection mobile robot
  "kantaro",'' in {\em Proceedings of the 2007 IEEE international conference on
  robotics and automation (ICRA)}, pp.~136--143, IEEE, 2007.

\bibitem{Hong1997Tethered}
D.~Hong, S.~A. Velinsky, and K.~Yamazaki, ``Tethered mobile robot for
  automating highway maintenance operations,'' {\em Robotics and
  Computer-Integrated Manufacturing}, vol.~13, no.~4, pp.~297--307, 1997.

\bibitem{Shnaps2014Online}
I.~Shnaps and E.~Rimon, ``Online coverage by a tethered autonomous mobile robot
  in planar unknown environments,'' {\em IEEE Transactions on Robotics},
  vol.~30, no.~4, pp.~966--974, 2014.

\bibitem{Mechsy2017Novel}
L.~Mechsy, M.~Dias, W.~Pragithmukar, and A.~Kulasekera, ``A novel offline
  coverage path planning algorithm for a tethered robot,'' in {\em Proceedings
  of the 2017 International Conference on Control, Automation and Systems
  (ICCAS)}, pp.~218--223, IEEE, 2017.

\bibitem{Sharma20192}
G.~Sharma, P.~Poudel, A.~Dutta, V.~Zeinali, T.~T. Khoei, and J.-H. Kim, ``A
  2-approximation algorithm for the online tethered coverage problem.,'' in
  {\em Robotics: Science and Systems}, 2019.

\bibitem{Pratt2008Use}
K.~S. Pratt, R.~R. Murphy, J.~L. Burke, J.~Craighead, C.~Griffin, and
  S.~Stover, ``Use of tethered small unmanned aerial system at berkman plaza ii
  collapse,'' in {\em Proceedings of the 2008 IEEE International Workshop on
  Safety, Security and Rescue Robotics}, pp.~134--139, IEEE, 2008.

\bibitem{Abad2011Motion}
P.~Abad-Manterola, I.~A. Nesnas, and J.~W. Burdick, ``Motion planning on steep
  terrain for the tethered axel rover,'' in {\em Proceedings of the 2011 IEEE
  International Conference on Robotics and Automation (ICRA)}, pp.~4188--4195,
  IEEE, 2011.

\bibitem{Tanner2013Online}
M.~M. Tanner, J.~W. Burdick, and I.~A. Nesnas, ``Online motion planning for
  tethered robots in extreme terrain,'' in {\em Proceedings of the 2013 IEEE
  International Conference on Robotics and Automation (ICRA)}, pp.~5557--5564,
  IEEE, 2013.

\bibitem{Shapovalov2020Exploration}
D.~Shapovalov and G.~A. Pereira, ``Exploration of unknown environments with a
  tethered mobile robot,'' in {\em Proceedings of the 2020 IEEE/RSJ
  International Conference on Intelligent Robots and Systems (IROS)},
  pp.~6826--6831, IEEE, 2020.

\bibitem{Bhattacharya2012Topological}
S.~Bhattacharya, M.~Likhachev, and V.~Kumar, ``Topological constraints in
  search-based robot path planning,'' {\em Autonomous Robots}, vol.~33, no.~3,
  pp.~273--290, 2012.

\bibitem{Teshnizi2014Computing}
R.~H. Teshnizi and D.~A. Shell, ``Computing cell-based decompositions
  dynamically for planning motions of tethered robots,'' in {\em Proceedings of
  the 2014 IEEE International Conference on Robotics and Automation (ICRA)},
  pp.~6130--6135, IEEE, 2014.

\bibitem{Salzman2015Optimal}
O.~Salzman and D.~Halperin, ``Optimal motion planning for a tethered robot:
  Efficient preprocessing for fast shortest paths queries,'' in {\em
  Proceedings of the 2015 IEEE International Conference on Robotics and
  Automation (ICRA)}, pp.~4161--4166, IEEE, 2015.

\bibitem{Xavier1999Shortest}
P.~G. Xavier, ``Shortest path planning for a tethered robot or an anchored
  cable,'' in {\em Proceedings of the 1999 IEEE International Conference on
  Robotics and Automation (ICRA)}, vol.~2, pp.~1011--1017, IEEE, 1999.

\bibitem{Brass2015Shortest}
P.~Brass, I.~Vigan, and N.~Xu, ``Shortest path planning for a tethered robot,''
  {\em Computational Geometry}, vol.~48, no.~9, pp.~732--742, 2015.

\bibitem{Kim2014Path}
S.~Kim, S.~Bhattacharya, and V.~Kumar, ``Path planning for a tethered mobile
  robot,'' in {\em Proceedings of the 2014 IEEE International Conference on
  Robotics and Automation (ICRA)}, pp.~1132--1139, IEEE, 2014.

\bibitem{Kim2015Path}
S.~Kim and M.~Likhachev, ``Path planning for a tethered robot using
  multi-heuristic a* with topology-based heuristics,'' in {\em Proceedings of
  the 2015 IEEE/RSJ International Conference on Intelligent Robots and Systems
  (IROS)}, pp.~4656--4663, IEEE, 2015.

\bibitem{Dolgov2010Path}
D.~Dolgov, S.~Thrun, M.~Montemerlo, and J.~Diebel, ``Path planning for
  autonomous vehicles in unknown semi-structured environments,'' {\em The
  International Journal of Robotics Research}, vol.~29, no.~5, pp.~485--501,
  2010.

\bibitem{Yang2023Self}
T.~Yang, J.~Liu, Y.~Wang, and R.~Xiong, ``Self-entanglement-free tethered path
  planning for non-particle differential-driven robot,'' in {\em Proceedings of
  the 2023 International Conference on Robotics and Automation (ICRA)},
  pp.~1--7, IEEE, 2023.

\end{thebibliography}

\end{document}